\documentclass[journal]{IEEEtran}

\usepackage{tikz-cd}
\usepackage{arydshln}
\usepackage{signalpreamble}

\AtBeginDocument{%
  \normalfont
}

\crefname{bound}{bound}{bounds}

\myexternaldocument{main-supp}

\newcommand{\diam}{\mathrm{diam}\,}

\newcommand{\Vsub}{U}
\newcommand{\Vsubc}{U\setcomp}
\newcommand{\Gsub}{G_{\Vsub}}

\newcommand{\Psub}{\bP_{\Vsub}}

\NewDocumentCommand\Ssub{ o }{%
	\IfValueTF{#1}%
	{\bS_{#1, \Vsub}}%
	{\bS_{\Vsub}}%
}
\newcommand{\Fsub}{\bF_{\Vsub}}
\newcommand{\hFsub}{\widehat{\bF}_{\Vsub}\tc{n}}
\newcommand{\Lsub}{\bL_{\Vsub}}

\newcommand{\Msub}{\bM_{\Vsub\Vsub}}
\newcommand{\tbL}{\widetilde{\mathbf{L}}}

\newcommand{\cut}{\mathrm{cut}}

\newcommand{\lambdamax}[1]{\lambda_{\max}\left(#1\right)}
\newcommand{\lambdamin}[1]{\lambda_{\min}\left(#1\right)}

\newacronym{GSP}{GSP}{graph signal processing}
\newacronym{MSE}{MSE}{mean squared error}
\newacronym{GSO}{GSO}{graph shift operator}
\newacronym{DSSO}{DSSO}{distance-based subgraph shift operator}
\newacronym{GFT}{GFT}{graph Fourier transform}
\newacronym{GNN}{GNN}{graph neural network}
\newacronym{DLP}{DLP}{distance-based Laplacian polynomial}
\newacronym{SFFA}{SFFA}{subgraph filter free algebra}
\newacronym{ST-SFFA}{ST-SFFA}{spatiotemporal SFFA}
\newacronym{SHLA}{SHLA}{symmetric hop Laplacian algebra}
\newacronym{SCA}{SCA}{subgraph cover algebra}
\newacronym{NESCA}{NESCA}{nested ensemble subgraph cover algebra}
\newacronym{SDO}{SDO}{sheaf diffusion operator}
\newacronym{TSO}{TSO}{temporal shift operator}
\newacronym{SSI}{SSI}{semi-shift invariant}
\newacronym{SFL}{SFL}{subgraph filter learning}
\newacronym{LSI}{LSI}{linear shift-invariant}
\newacronym{ST-SFL}{ST-SFL}{spatiotemporal subgraph filter learning}
\newacronym{GMRF}{GMRF}{Gaussian Markov random field}
\newacronym{WSS}{WSS}{wide-sense stationary}
\newacronym{LBFGS}{LBFGS}{limited-memory BFGS}
\newacronym{DFT}{DFT}{discrete Fourier transform}
\newacronym{AR}{AR}{autoregressive}
\newacronym{RI}{RI}{relative improvement}

\begin{document}

\title{Filter Learning for Subgraphs: Algebras and Performance Risk Bounds}

\author{
Purui Zhang,
Feng Ji,
Yanan Zhao,
Bihan Wen, and 
Wee Peng Tay
}

\maketitle

\begin{abstract}
Graph signal processing tasks that leverage spectral information typically assume access to the complete graph topology, which is often unavailable in practice. We propose a systematic framework for subgraph filter learning (SFL), where subgraph-supported operators approximate ambient graph filters under partial observations. We formulate SFL as a statistical learning problem in which optimal subgraph operators are inherently data-dependent. To address the difficulty of directly estimating such operators, we develop a subgraph filter algebra based on distance-aware Laplacian constructions, defining a structured and controllable class of filters for effective approximation. We further establish performance risk bounds under the least squares loss, quantifying how well the learned operator approximates the restricted ambient mapping. Experiments real-world datasets show that, for SFL tasks, the proposed algebraic models consistently outperform polynomial filters, distribution-agnostic operators, and direct numerical filter learning baselines that attempt to recover the underlying structure from data.
\end{abstract}

\section{Introduction}\label{sec:intro}

Since its inception, \gls{GSP} has encompassed a diverse range of research areas, including the analysis of graph-structured data \cite{Shu13,Ort18,MatSeg19}, statistical and spectral graph analysis \cite{Gir:C15,PerVan:J17, JiaTayEld24, JiaGolJi25}, the inference of graph topology \cite{Don16,Egi17,ZhuJin:C17}, and advancements in graph neural networks \cite{Scar09,kipf17,JiZhaLee25}. At the heart of \gls{GSP} lies the \gls{GSO}, which serves as a fundamental component directly linked to the graph's topology. The \gls{GSO} plays a pivotal role in enabling key operations such as filtering \cite{Chen14, JiTayOrt23}, signal recovery \cite{Chen15,SegMar:J16,QiuMao:J17,WangChen:C15,CaoWang:C25}, and the computation of the \gls{GFT} \cite{Sand13gft, JiTay19}.

Conventional \gls{GSP} tasks, including filtering, spectral analysis, and stationarity analysis\cite{JiTay19,JiaTay22a}, typically assume access to the full graph topology and its associated \gls{GSO}. In many practical settings, however, only a partial view of the graph is available. A stakeholder who owns the graph may grant a third party access to only a subset of nodes because of privacy, computational, or operational constraints; see \cref{fig:eg_subgraph}. The third party seeks to estimate an output signal by filtering over the accessible subgraph, but it cannot directly observe the full graph topology or the complete input graph signal. Instead, the stakeholder provides a subgraph filter (or \gls{GSO}) defined solely on the accessible nodes.\footnote{Alternatively, the stakeholder could provide filtered data. However, this approach incurs recurring service overhead, as the stakeholder must respond to each processing request. By contrast, providing a reusable subgraph filter is a one-time information transfer that avoids this overhead.} The quality of a subgraph filter is evaluated by comparing two outputs: (i) the ambient graph filter applied to the full signal, followed by restriction to the node subset; and (ii) the subgraph filter applied directly to the restricted signal. A sufficiently small discrepancy between these outputs indicates that the subgraph filter is useful for the third party.

\begin{figure}[!t]
\begin{center}
\includegraphics[width=0.98\columnwidth]{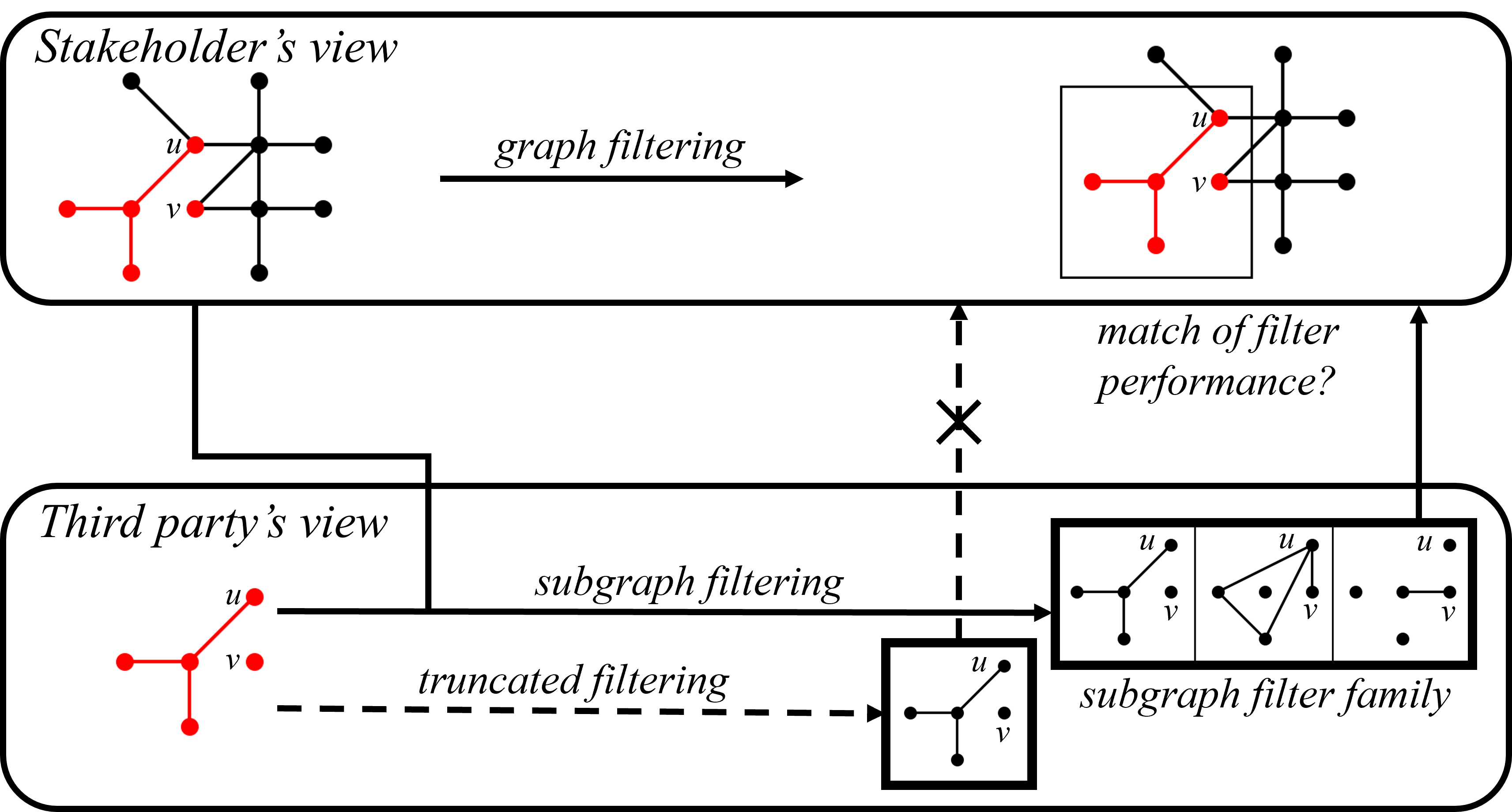}
\end{center}
\vspace{-5mm}
\caption{Illustration of subgraph filtering. Note that the third party has no knowledge of nodes and edges beyond the subgraph structure. In the stakeholder's graph filter with at least 2-hop connectivity, information from node $v$ contributes to the output at node $u$. However, in the truncated filter, the contribution from node $v$ to node $u$ is lost. In contrast, using filters from a subgraph filter family can effectively capture the influence of node $v$ on node $u$, resulting in improved performance.}
\vspace*{-3pt}
\label{fig:eg_subgraph}
\end{figure}

A naive approach is a truncated filter formed by extracting the principal submatrix of the ambient filter (represented as a matrix) corresponding to the subgraph nodes. Although this construction preserves message passing within the subgraph, it removes all interactions between subgraph nodes and external nodes, leading to systematic information loss. Consequently, truncated filters often incur substantial approximation error. This limitation motivates the development of a more expressive class of subgraph filters that can better approximate the restricted ambient filtering operation without requiring access to the full graph structure.

We study \gls{SFL}, which seeks to learn a subgraph-supported filter for a specified node subset from ambient graph data. The learned filter is deployed only on the accessible subgraph and approximates the restricted action of the ambient filtering operation, without revealing the ambient filter or the graph structure outside the subset. To the best of our knowledge, no comprehensive study has systematically formulated \gls{SFL} and derived its error analysis with respect to both data and graph structure.
Our main contributions are summarized as follows:
\begin{itemize}
    \item We formulate the problem of \gls{SFL}, which studies how to construct subgraph-supported filters that approximate the restricted action of an ambient graph transformation. We propose a $k$-hop, distance-based subgraph filter space based on matrix algebras for \gls{SFL}. 
    
    \item We provide a finite-sample excess-risk analysis of ridge-regularized least-squares \gls{SFL} over a $k$-hop, distance-based subgraph filter space under sub-Gaussian graph-\gls{WSS} inputs. The bound decomposes prediction error into (i) irreducible loss from subgraph-only observations, (ii) approximation error from excluding interactions beyond $k$ hops in the ambient graph, (iii) regularization error, and (iv) filter-bank dimension effects. We also show how graph-distance structure controls the trade-off between model expressiveness and finite-sample estimation error.

    \item We evaluate the \gls{SFL} framework on filtering, signal reconstruction, and prediction tasks using both synthetic and real-world datasets. The results verify consistent improvements over standard polynomial filtering methods and statistical baselines.
    
\end{itemize}

A preliminary version of this work was presented in \cite{ZhJi:C25}. Compared with the preliminary version, this paper recasts \gls{SFL} in a statistical signal processing framework, generalizes distance-based constructions from normalized Laplacians to broader types of shift operators on subgraphs, introduces a finite-sample excess-risk analysis for ridge-regularized \gls{SFL}, and significantly expands experiments to filtering, reconstruction, and prediction with additional hyperparameter and ablation studies.

\subsection{Related Work}\label{subsec:related-work}
\subsubsection{Signal processing on substructures}

Several prior works have studied \gls{GSP} on graph substructures, including local-set-based graph signal reconstruction~\cite{WangLiu:J15}, graph signal decimation~\cite{TreBor:J16}, and spectral graph reduction~\cite{And:J19}. In \glspl{GNN}, SplitGNN~\cite{WuYao:C23} partitions the original graph and applies spectral graph filters to address heterophily in fraud detection, while STMS-GCN~\cite{WangGuo:C25} employs multiple learnable subgraphs to capture spatio-temporal motion patterns. These works use graph substructures for reconstruction, reduction, or representation learning, whereas \gls{SFL} characterizes the approximation of ambient-graph transformations on prescribed subgraphs.

\subsubsection{Graph filtering and filter-bank design}

Polynomial graph filters for a prescribed \gls{GSO} have been widely studied, including optimal designs for implementing or approximating linear network operators~\cite{SegMar:J17}. Extensions beyond a single \gls{GSO} include parallel filters associated with different \glspl{GSO}~\cite{HuaRi:J19}, polynomial filters of multiple commutative shifts~\cite{EmiChe:J22}, and graph-based classification with multiple shift matrices based on features~\cite{FanTep:J22}. Expressiveness has also been improved by relaxing node-invariant parameter sharing through node-variant~\cite{SegMar:C16}, edge-varying~\cite{IsuGam:J22}, and partition-wise~\cite{LiYang:C25} graph filters, which assign different weights to different nodes, edges, and node partitions, respectively. Although applicable to induced subgraphs, these methods do not explicitly address the ambient-subgraph relationship central to \gls{SFL}, which requires filter banks tailored to approximating ambient transformations on prescribed subgraphs.

\subsubsection{Stochastic graph signal estimation}

Stationary \gls{GSP} models graph random processes through graph-dependent second-order statistics~\cite{Gir:C15,MarSeg:J17,PerVan:J17}, supporting mean-square-optimal graph Wiener filters and Bayesian estimation under prescribed stochastic models~\cite{IsuDi:C18,HarTan:J23,KroRou:J22,FukHar:C25}. Random-design regression provides an unstructured finite-sample counterpart. Specifically, the paper \cite{Hsu:J12} analyzes ordinary least squares and ridge regression, while \cite{MouRos:J22} presented a concise excess-risk analysis of ridge regression in the same setting. Unlike full-graph stochastic estimators and unstructured random-design analyses, \gls{SFL} learns subgraph-supported filters from graph-signal pairs and quantifies how intrinsic prediction and filter-bank approximation errors depend on graph distances, subgraph boundary interactions, or the way a prescribed subgraph is embedded in the ambient graph.

\emph{Organization of the paper.} In \cref{sec:filter-learning}, we formalize the \gls{SFL} problem and motivate the need for new filter-bank designs. In \cref{sec:subgraph-filter-algebra}, we present a filter-algebra architecture and introduce distance-based subgraph operators that integrate with this architecture. In \cref{sec:error-bound}, we provide a theoretical analysis of subgraph-filter approximation error, characterizing the intrinsic limits of \gls{SFL} and informing the design of subgraph filter banks. Finally, in \cref{sec:experiments}, we evaluate the effectiveness of different subgraph filter-bank types across diverse \gls{SFL} tasks.

\emph{Notations.} Let $\bbR$ denote the set of real numbers and $\bbN$ the set of natural numbers. Matrices and vectors are represented by boldface letters. For a set of signals $\set{\bb_i\in\bbR^N}_{i=1}^T$, the design matrix $[\bb_i]_{i=1}^T\in\bbR^{N\times T}$ is formed by stacking the column vectors $\bb_i$ along the rows.
Let $\Vsub \subset \set{1,\dots,N}$ be a subset of indices. We define the partition of a matrix $\bS\in\bbR
^{N\times N}$ with respect to $\Vsub$ as (with reordering of rows and columns if necessary) $\bS = \smat{\bS_{\Vsub\Vsub}&\bS_{\Vsub\Vsubc} \\ \bS_{\Vsubc\Vsub}&\bS_{\Vsubc\Vsubc}}$.

Given a set of linear operators $\calS$, $\spn\calS$ denotes the span of $\calS$, i.e., the set of linear combinations of elements in $\calS$. 

\section{The Subgraph Filter Learning Problem}\label{sec:filter-learning}

In this section, we first formalize the \gls{SFL} problem, which is a statistical learning problem that aims to find a subgraph-supported operator that approximates the restricted ambient mapping. Then, we present the empirical formulation of the problem, which is more practical for real-world applications. Finally, we provide examples as illustrations.

\subsection{Problem Formulation}\label{subsec:prob-form}

Let $G=(V,E)$ be an undirected, unweighted graph with vertex set $V$, $|V|=N$, and edge set $E$. The adjacency matrix $\bA\in\set{0,1}^{|V|\times|V|}$ is defined by $\bA_{ij}=1$ if and only if $(v_i, v_j)\in E$, and $\bA_{ij}=0$ otherwise. The degree matrix $\bD=\diag\set{\deg(v_i)}_{i=1}^{|V|}$ is a diagonal matrix, where $\deg(v_i)$ denotes the degree of vertex $v_i$ in $G$. The Laplacian is defined as $\bL=\bD-\bA$, and the normalized Laplacian as $\tbL=\bI-\bD^{-1/2}\bA\bD^{-1/2}$. 

Suppose the graph $G$ is owned by a stakeholder. Associated with the graph is a random input graph signal $\bx\in \bbR^{|V|}$ where $\E\norm{\bx}_2<\infty$. The stakeholder grants access to a subset of nodes $\Vsub \subseteq V$, with $|\Vsub|=N_0$, to a third party. The $\Vsub$-induced subgraph is defined as $G_{\Vsub}\coloneq(\Vsub, \set{(u,v)\in E \given u, v\in \Vsub})$. The restriction of $\bx$ to $\Vsub$ is given by $\bx_{\Vsub}\coloneq\Psub\bx\in\bbR^{|\Vsub|}$, where $\Psub\in \set{0,1}^{|\Vsub|\times |V|}$ is the \emph{selection matrix}, satisfying $\Psub\Psub\T=\bI_{\Vsub}$, the $|\Vsub|\times |\Vsub|$ identity matrix.

An output signal is modeled as $\by=f(\bx)+\bw \in \bbR^N$, where $\E\norm{f(\bx)}_2<\infty$, and $\bw$ is a random noise vector, which is uncorrelated to $\bx$. It satisfies $\E\bw=\bzero$ and $\E\norm{\bw}_2<\infty$.

A third party has no knowledge of $f(\cdot)$ or the full ambient graph $G$. Instead, it has access only to the $\Vsub$-induced subgraph $\Gsub$ and the restricted signals $\bx_{\Vsub}$. To estimate the restricted output signal $\by_{\Vsub}=\Psub\by$, the stakeholder provides the third party with a subgraph filter $\Fsub$ supported on $\Gsub$ (see \cref{fig:comm}). The objective is to minimize the \emph{performance risk}:
\begin{align}\label{eq:p-loss}
\calR_{\calL}(\Fsub)
=\E[\calL\parens*{\by_{\Vsub},\Fsub\bx_{\Vsub}}]
=\E[\calL\parens*{\Psub \by,\Fsub\Psub\bx}],
\end{align}
where $\calL\parens{\cdot,\cdot}$ is a nonnegative loss function. 

\begin{figure}[!htb]
\centering

\begin{tikzpicture}[>=stealth, node distance=1cm]
\node (G) at (-4,0) {$(G, \bx)$};
\node (Gz) at (-1,1) {$(G, \by)$};
\node (Gy) at (-1,-1) {$(\Gsub, \Psub\bx)$};
\node (Gzy) at (3,-1) {$(\Gsub, \Fsub\Psub\bx)$};
\node (Gzb) at (3,1) {$(\Gsub, \Psub\by)$};

\draw[->] (G) -- (Gz) node[midway, above] {$f(\cdot)+\bw$};
\draw[->] (G) -- (Gy) node[midway, above] {$\Psub$};
\draw[->] (Gy) -- (Gzy) node[midway, above] {$\mathstrut\Fsub$};
\draw[->] (Gz) -- (Gzb) node[midway, above] {$\mathstrut\Psub$};

\draw[-, dashed] (Gzy) -- (Gzb) node[midway, right] {$?$};

\end{tikzpicture}
\caption{The subgraph filter learning framework. The upper path represents the stakeholder's processing pipeline, and the lower path represents the third party's processing pipeline. The goal of \gls{SFL} is to design subgraph filter $\Fsub$ such that the two pipelines produce consistent outputs on the subgraph.}
\label{fig:comm}
\end{figure}

The subgraph filter $\Fsub$ is constructed as a linear combination of basis elements from a specified subgraph filter bank $\calS(\Gsub)=\set{\bB_j\in\bbR^{N_0\times N_0}}_{j=1}^m$, such that
\begin{align}\label{eq:subgraph-filter}
\Fsub=\sum_{j=1}^m\theta_j\bB_j,
\end{align}
where $\bB_j\in\calS(\Gsub)$, and the filter bank size $m$ is constrained by the budget
\begin{align}\label{eq:budget}
    m\leq \alpha N_0^\beta,\ \mathrm{with}\  \alpha>0,\ 0<\beta\leq 2.
\end{align}

The performance loss in \cref{eq:p-loss} quantifies the expected discrepancy between the restricted ambient transformation $\Psub \by$ and the output $\Fsub\Psub\bx$ generated by the subgraph filter. This metric serves as a measure of the predictive accuracy of the subgraph filter. The optimization problem in \gls{SFL} is inherently challenging (under arbitrary loss functions) due to two primary factors: (i) the fundamental approximation arising from the restricted observation $\bx_{\Vsub}$, even when the filter space is unconstrained; and (ii) the additional approximation error introduced by the constrained dimensionality of the filter space $\calS(\Gsub)$.

In the context of the scenario in \cref{sec:intro}, the stakeholder's objective is to construct a subgraph filter $\Fsub$ such that $\Fsub\Psub\bx \approx \Psub \by$ in expected loss, which can then be transmitted to the third party. In scenarios where the mapping $f(\cdot)$ is unknown, the stakeholder relies on training data comprising input-output signal pairs $(\bx,\by)$ to learn $\Fsub$ by minimizing the empirical performance loss.
As demonstrated in \cref{subsec:empirical}, even when the data distribution is fully known, the oracle subgraph filter generally differs from that based on the naive induced subgraph shift operator. In particular, it includes a boundary-correction term arising from the unobserved complement $\Vsub^c$. This observation motivates the construction of an appropriate subgraph filter bank $\calS(\Gsub)$ to accurately approximate the target restricted transformation.

The choice of the subgraph filter bank $\calS(\Gsub)$ plays a pivotal role in achieving accurate approximations, as it determines the expressiveness and flexibility of the learned operator. The design and construction of such filter banks are discussed in detail in \cref{sec:subgraph-filter-algebra}.

\subsection{Empirical Optimization}\label{subsec:empirical}

In practice, when the distribution of $\bx$ is unknown, the stakeholder can only learn the subgraph filter $\Fsub$ from a finite set of training data. We assume that the stakeholder has access to a training dataset $\set{(\bx^i, \by^i)}_{i=1}^n$ consisting of $n$ \gls{iid} samples drawn from the joint distribution of $(\bx,\by)$. The empirically optimal subgraph filter is given by
\begin{align}\label{eq:loss}
\hFsub = \argmin_{\Fsub\in\spn\calS(\Gsub)} \frac{1}{n}\sum_{i=1}^{n} \calL\parens*{\Psub\by^i, \Fsub\Psub\bx^i},
\end{align}
Using \cref{eq:subgraph-filter}, the \gls{SFL} problem \cref{eq:loss} reduces to a finite-dimensional optimization problem over $\set{\theta_j}_{j=1}^m$.

To mitigate potential ill-posedness caused by limited training data, regularization can be employed. A common approach is ridge regression, which introduces a penalty term to control the complexity of the learned filter. The regularized optimization problem is formulated as follows:
\begin{align}\label{eq:loss-reg}
\hFsub=\argmin_{\Fsub\in\spn\calS(\Gsub)} \frac{1}{n}\sum_{i=1}^{n} \calL\parens*{\Psub\by^i, \Fsub\Psub\bx^i}+\eta\norm{\Fsub}_F^2,
\end{align}
where $\eta > 0$ is the regularization parameter, and $\norm{}_F$ denotes the Frobenius norm. This regularization term reduces overfitting and improves numerical stability, particularly when the number of training samples is limited.

\subsection{An Example: Least Squares SFL}\label{subsec:closed-form}

To illustrate the \gls{SFL} framework, we consider least-squares \gls{SFL} with $\calL=L^2$ in \cref{eq:loss} as a representative example. This setting highlights both the empirical optimization procedure and the derivation of analytical solutions.
By concatenating the training samples $\set{(\bx^i, \by^i)}_{i=1}^n$, we obtain
\begin{align*}
\bX_{\Vsub}&=\bmat{\bx^1_{\Vsub}\enspace \bx^2_{\Vsub}\enspace\cdots\enspace \bx^n_{\Vsub}} \in\bbR^{N_0\times n}\\
\bX_{\Vsubc}&=\bmat{\bx^1_{\Vsubc} \enspace \bx^2_{\Vsubc}\enspace\cdots\enspace \bx^n_{\Vsubc}}\in\bbR^{(N-N_0)\times n},
\end{align*}
where $\bX_{\Vsub}$ and $\bX_{\Vsubc}$ collect the samples restricted to $\Vsub$ and its complement, respectively. Similarly, $\bY_{\Vsub}$ and $\bY_{\Vsubc}$ denote the corresponding output-signal matrices.

We assume that the filter space is unconstrained, namely $\spn\calS(\Gsub)=\bbR^{N_0\times N_0}$.
We further assume that the subgraph second-moment matrix
$\Msub\coloneq\E[\bx_{\Vsub}\bx_{\Vsub}\T]$ and the sample Gram matrix $\bX_{\Vsub}\bX_{\Vsub}\T$ are nonsingular, and that $\Psub f(\bx)$ is square-integrable.

The empirical optimization problem in \cref{eq:loss} reduces to
\begin{align*}
\min_{\Fsub} \norm*{\bY_{\Vsub}-\Fsub\bX_{\Vsub}}_F^2.
\end{align*}
This problem admits the closed-form solution
\begin{align}\label{eq:lmmse}
\hFsub=\bY_{\Vsub}\bX_{\Vsub}\T(\bX_{\Vsub}\bX_{\Vsub}\T)^{-1},
\end{align}
which we refer to as the \emph{numerical LMMSE subgraph filter}.
By the strong law of large numbers, $\hFsub \to \Fsub^*$ almost surely (a.s.) as $n\to\infty$, where
\begin{align}
\Fsub^*
&=\E[\by_{\Vsub}\bx_{\Vsub}\T](\E[\bx_{\Vsub}\bx_{\Vsub}\T])^{-1} \nn
&=\E[\Psub f(\bx)\bx_{\Vsub}\T]\Msub^{-1}. \label{eq:oracle-mse}
\end{align}
The operator $\Fsub^*$ is referred to as the \emph{oracle LMMSE subgraph filter}.

We next consider the setting in which $\by=\bF\bx$, where $\bF=\bI-\zeta\bL$ represents one-step graph diffusion and $\bx\in\calN(\bzero,(\bL+\mu\bI)^{-1})$ is a centered \gls{GMRF}. In this case, the oracle LMMSE filter takes the form
\begin{align*}
\Fsub^*=\bI_{\Vsub}-\zeta\bL_{\Vsub\Vsub}+\zeta\bA_{\Vsub\Vsubc}(\bL_{\Vsubc\Vsubc}+\mu\bI_{\Vsubc})^{-1}\bA_{\Vsubc\Vsub}.
\end{align*}
This expression comprises the subgraph diffusion term $\bI_{\Vsub}-\zeta\bL_{\Vsub\Vsub}$ and a boundary correction term $\zeta\bA_{\Vsub\Vsubc}(\bL_{\Vsubc\Vsubc}+\mu\bI_{\Vsubc})^{-1}\bA_{\Vsubc\Vsub}$ induced by signals outside $\Vsub$.
Let $\bD_\mu\coloneq\bD_{\Vsubc\Vsubc}+\mu\bI_{\Vsubc}$. The boundary correction term admits the Neumann series expansion
\begin{align*}
\sum_{k=0}^\infty\bA_{\Vsub\Vsubc}(\bD_\mu^{-1}\bA_{\Vsubc\Vsubc})^k\bD_\mu^{-1}\bA_{\Vsubc\Vsub}.
\end{align*}
The $k$-th term aggregates walks that leave $\Vsub$, take $k$ steps within $\Vsubc$, and then return to $\Vsub$; accordingly, these walks have length $k+2$. Moreover, as shown in \cref{sec:diffusion-details} of the supplementary material, the performance loss satisfies
\begin{align*}
\calR_{L^2}(\Fsub^*)\leq\frac{\zeta^2}{\mu+\lambdamin{\bL_{\Vsubc\Vsubc}}}|\cut(\Vsub,\Vsubc)|,
\end{align*}
where $|\cut(\Vsub,\Vsubc)|=|\set{(u,v)\in E(G)\given u\in\Vsub,v\in\Vsubc}|$ denotes the size of the graph cut set. Thus, the performance loss is jointly determined by the diffusion strength, the \gls{GMRF} regularization, and the connectivity between $\Gsub$ and its complement.

The preceding results show that both the oracle subgraph filter and its achievable error depend jointly on the input statistics and the graph structure.
Although the numerical LMMSE filter is available in closed form, its computation requires inverting the sample Gram matrix $\bX_{\Vsub}\bX_{\Vsub}\T$, which may be singular or ill-conditioned when the available data are limited. This observation motivates learning within a structured finite-dimensional filter class of controllable size. We therefore develop a systematic construction of the filter bank $\calS(\Gsub)$ through subgraph filter algebra and analyze its structural properties.

\section{Subgraph Filter Algebras}\label{sec:subgraph-filter-algebra}

In this section, we construct structured filter banks $\calS(\Gsub)$ for \gls{SFL}. We preface by introducing necessary notations and definitions, followed by the formal definition of subgraph shift operators, which serve as the foundational elements for constructing the filter bank. We then discuss the expressiveness and formal dimension of the resulting subgraph filter algebra. 

\subsection{Preliminaries}\label{subsec:algebra-prelim}

The shortest-path distance between two nodes $u$ and $v$ in $G$ is denoted by $d_G(u,v)$, and the diameter of $G$ is defined as $\diam G = \max\set{d_G(u,v)\given u, v\in G}$. Two nodes $u$ and $v$ are said to be \emph{connected} in $G$ if a path exists between them, and \emph{disconnected} otherwise (we let $d_G(u,v)=\infty$ in this case).

For any matrix $\bB$, we define $\bbR[\bB]=\set{\bM \given \bM=\sum_{l=0}^r a_l\bB^l,\ a_l\in\bbR,\ r\in\bbN}$ and $\bbR_{\leq r}[\bB]=\set{\bM \given \bM=\sum_{l=0}^{r_0} a_l\bB^l,\ a_l\in\bbR,\ r_0\leq r}$.
For matrices $\bB_1,\dots,\bB_k$ of the same dimensions, we define
\begin{align}
\bbR[\bB_1,\dots,\bB_k] &\coloneq\spn\bigcup_{i=1}^k \bbR[\bB_i], \\
\bbR_{\leq r}[\bB_1,\dots,\bB_k] &\coloneq\spn\bigcup_{i=1}^k \bbR_{\leq r}[\bB_i].
\end{align}
Elements in $\bbR[\bB_1,\dots,\bB_k]$ consist of linear combinations of polynomials in individual matrices, without cross-products between distinct matrices.

We define the matrix \emph{free algebra} generated by a set of matrices $\set{\bB_1,\bB_2,\dots,\bB_k}$ of the same dimensions as 
\begin{align}
\bbR_{\leq r}&\langle\bB_1,\bB_2,\dots,\bB_k\rangle \notag\\
&=\spn \set*{\prod_{i=0}^{r}\bM_{l_i}\given \bM_{l_i}\in\set{\bB_1,\bB_2,\ldots,\bB_k} }.\label{def:FA}
\end{align}

\subsection{Distance-based Subgraph Shift Operators}\label{subsec:DSSO}

For $u,v\in\Vsub$, the induced-subgraph distance satisfies $d_{\Gsub}(u,v)\geq d_G(u,v)$, and nodes connected in $G$ may become disconnected in $\Gsub$.  Consequently, relying solely on the adjacency matrix or Laplacian of $\Gsub$ may necessitate higher-order polynomials to propagate information between nodes, or may fail entirely if the nodes are disconnected.

To mitigate these limitations and more effectively approximate the message-passing behavior of the original graph, we propose the use of auxiliary subgraphs derived from both $G$ and $\Gsub$.

\begin{Definition}[DSSO]\label{def:DSSO}
Let $G$ be a graph and $\Gsub$ its induced subgraph. A matrix $\Ssub$ is termed a $k$-hop \gls{DSSO} on $\Gsub$ with respect to $G$ if its $(i,j)$-th entry $(\Ssub)_{i,j} = 0$ for all $v_i, v_j \in \Vsub$ such that $d_G(v_i, v_j) > k$.
\end{Definition}

Note that in \cref{def:DSSO}, the distance metric is based on the ambient graph $G$. For instance, the adjacency matrix and Laplacian of the \emph{induced subgraph} $\Gsub$ are examples of 1-hop \glspl{DSSO}. Additionally, diagonal matrices of size $|\Vsub| \times |\Vsub|$ can be regarded as 0-hop \glspl{DSSO}. The following properties are immediate.

\begin{Corollary}\label{cor:DSSO-traits}
Suppose $\bS_1$ and $\bS_2$ are $k$-hop \glspl{DSSO} of $\Gsub$ over $G$, then $\theta_1\bS_1+\theta_2\bS_2$ is a $k$-hop \gls{DSSO} (of $\Gsub$ over $G$) for any $\theta_1, \theta_2\in\bbR$. 
\end{Corollary}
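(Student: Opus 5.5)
The plan is a direct entrywise check against \cref{def:DSSO}. The claim is simply that the set of $k$-hop \glspl{DSSO} of $\Gsub$ over $G$ is closed under linear combinations. Equivalently, this set is the linear subspace of $\bbR^{N_0\times N_0}$ cut out by the coordinate constraints $(\bS)_{i,j}=0$ for all pairs $(v_i,v_j)$ with $d_G(v_i,v_j)>k$. No earlier result beyond the definition itself is needed.

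Concretely, I will fix arbitrary $\theta_1,\theta_2\in\bbR$ and define $\bS\coloneq\theta_1\bS_1+\theta_2\bS_2$. This matrix is again $|\Vsub|\times|\Vsub|$, so it is a candidate operator on $\Gsub$. Next I take any pair $v_i,v_j\in\Vsub$ with $d_G(v_i,v_j)>k$. This includes the case where $v_i$ and $v_j$ are disconnected in $G$, since the paper's convention gives $d_G=\infty$ there. Because $\bS_1$ and $\bS_2$ are $k$-hop \glspl{DSSO}, we have $(\bS_1)_{i,j}=(\bS_2)_{i,j}=0$. By linearity of entries, $(\bS)_{i,j}=\theta_1(\bS_1)_{i,j}+\theta_2(\bS_2)_{i,j}=0$. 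Since the pair was arbitrary, $\bS$ satisfies the defining zero pattern and is therefore a $k$-hop \gls{DSSO}.

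There is no real obstacle here. The only point that needs care is that the zero pattern is determined by the ambient distance $d_G$ and not by $d_{\Gsub}$. Since $\bS_1$ and $\bS_2$ are defined with respect to the same pair $(G,\Gsub)$, the index pairs that must vanish are the same for both matrices, so the constraints line up.

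I will close with a remark that the argument extends by induction to finite linear combinations $\sum_j\theta_j\bS_j$ of $k$-hop \glspl{DSSO}. Such a combination is exactly what appears in \cref{eq:subgraph-filter}, which shows that a filter built from a bank of $k$-hop \glspl{DSSO} is itself a $k$-hop \gls{DSSO}.
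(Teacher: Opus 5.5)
Your proposal is correct and takes the same route as the paper, which calls this corollary immediate from \cref{def:DSSO} and gives no further argument. Your entrywise check, $(\theta_1\bS_1+\theta_2\bS_2)_{i,j}=\theta_1(\bS_1)_{i,j}+\theta_2(\bS_2)_{i,j}=0$ whenever $d_G(v_i,v_j)>k$, is exactly the intended reasoning.
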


\begin{Corollary}\label{cor:DSSO-nv}
Suppose $\bS_1$ and $\bS_2$ are $k_1$-hop and $k_2$-hop \glspl{DSSO} of $\Gsub$ over $G$, respectively. Then $\bS_1\bS_2$ is a $(k_1+k_2)$-hop \gls{DSSO}. 
\end{Corollary}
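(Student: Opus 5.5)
The plan is to verify the sparsity pattern of $\bS_1\bS_2$ entrywise, using only \cref{def:DSSO} and the triangle inequality for the ambient shortest-path distance $d_G$. Fix $v_i,v_j\in\Vsub$ with $d_G(v_i,v_j)>k_1+k_2$; we must show $(\bS_1\bS_2)_{i,j}=0$. We expand the product as
\begin{align*}
(\bS_1\bS_2)_{i,j}=\sum_{v_l\in\Vsub}(\bS_1)_{i,l}(\bS_2)_{l,j},
\end{align*}
where the summation index ranges only over $\Vsub$, since both operators are $N_0\times N_0$ matrices indexed by the subgraph nodes.

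The key step is to show that every summand vanishes. Suppose, for contradiction, that some term is nonzero. Then both $(\bS_1)_{i,l}\neq 0$ and $(\bS_2)_{l,j}\neq 0$. By \cref{def:DSSO}, this forces $d_G(v_i,v_l)\le k_1$ and $d_G(v_l,v_j)\le k_2$. The triangle inequality for $d_G$ then gives $d_G(v_i,v_j)\le k_1+k_2$, which contradicts our choice of $i,j$. Hence each summand is zero, so the sum is zero, and $\bS_1\bS_2$ is a $(k_1+k_2)$-hop \gls{DSSO}.

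The only point requiring care is that the distances are measured in the ambient graph $G$, not in $\Gsub$. This causes no difficulty: $d_G$ is a genuine (extended) metric on $V$, so the triangle inequality holds for any intermediate node $v_l\in\Vsub\subseteq V$. The convention $d_G=\infty$ for disconnected pairs is also consistent with the argument. If $v_i$ and $v_j$ lie in different components of $G$, then no $v_l$ can be within finite distance of both, so again no summand can be nonzero. I do not expect a real obstacle here; the argument is a direct one-line consequence of the triangle inequality.
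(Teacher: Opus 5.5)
Your proof is correct, and it is essentially the argument the paper has in mind: the paper gives no written proof and calls this property immediate, and expanding $(\bS_1\bS_2)_{i,j}$ over intermediate nodes in $\Vsub$ and applying the triangle inequality for the ambient distance $d_G$ is that immediate argument. You also handle the $d_G=\infty$ convention for disconnected pairs correctly.
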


In summary, the distance-dependent behavior of subgraph filters can be systematically controlled through linear combinations and products of \glspl{DSSO}.

We next introduce the notion of distance-$k$ subgraphs, which provide a natural and illustrative example of $k$-hop \glspl{DSSO}.

\begin{Definition}\label{def:dist_sub}
For $1\leq k\leq\diam G$, let the distance-$k$ subgraph of $\Gsub$ be a graph such that its vertex set is $\Vsub$; and its edge set is $\set*{(u,v)\in \Vsub\times \Vsub\given d_G(u,v)=k}$, i.e., the edges are those pairs of nodes in $\Vsub$ that are $k$ hops apart in the ambient graph $G$.  
We denote this subgraph as $\Gsub^{[k]}$.
\end{Definition}

\begin{figure}[!tb]
\centering
\includegraphics[width=.65\columnwidth]{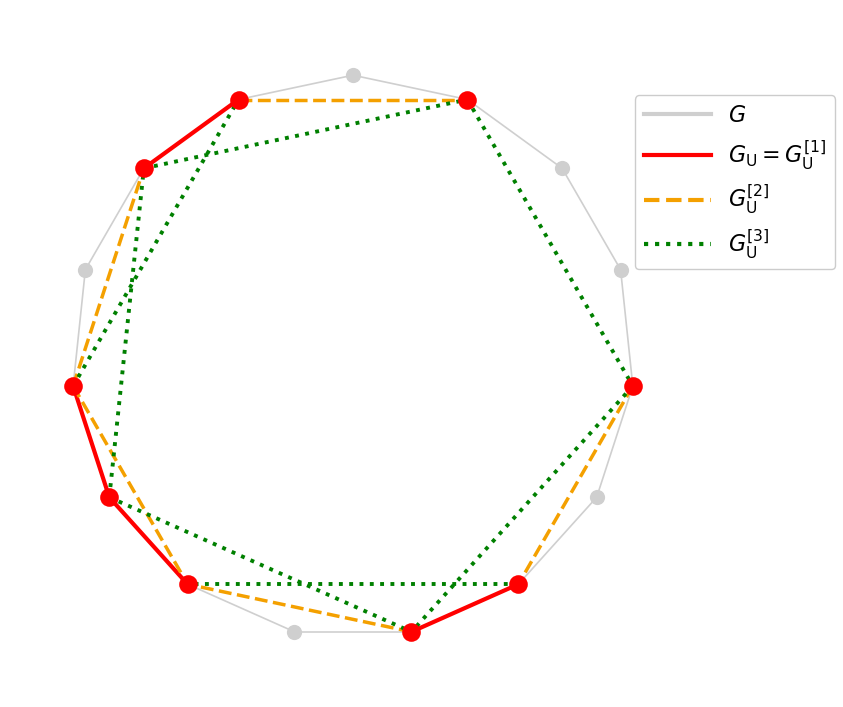}
\caption{Example for distance-based subgraphs on a cyclic graph $G$ with $15$ nodes.}
\label{fig:dist_sub}
\vspace{-3mm}
\end{figure}

\Cref{fig:dist_sub} illustrates the construction of $\Gsub^{[k]}$ on a cycle graph for $k\in\{1,2,3\}$. In this example, $\Gsub$ comprises five connected components, including two isolated vertices. When filter design is restricted to the induced subgraph \gls{GSO} of $\Gsub$, information transfer across disconnected components is prevented. In contrast, \glspl{DSSO} employing distance-$k$ subgraphs enable cross-component propagation, thereby enhancing the representational capacity of the resulting filters.

\subsection{DSSO Algebra}\label{subsec:ssoa}

As discussed above, $\Gsub$ may be disconnected, thereby limiting information exchange across components. A natural remedy is to combine operators associated with different distance-$k$ subgraphs. This leads to a non-commutative algebraic framework for constructing filters from multiple \glspl{DSSO}.

If filtering on $\Gsub\subseteq G$ is restricted to a single subgraph \gls{GSO} $\Lsub$, then the filter space is the commutative algebra $\bbR[\Lsub]$. Its operators share a common eigenspace, and its dimension is at most $|\Vsub|$. These structural constraints may limit expressiveness.
By contrast, we select \glspl{DSSO} as \emph{primitives} that are not jointly diagonalizable. The resulting non-commutative algebra is considerably richer, yielding a broader and more expressive class of subgraph filters.

\begin{Definition}\label{def:sffa}
Let $\set{\Ssub[1],\Ssub[2],\dots,\Ssub[k]}$ be a set of $k$-\glspl{DSSO} as primitives.
The $(k,r)$-\gls{SFFA} of $\Gsub$ is given by (recall the notation in \cref{def:FA}):
\begin{align}\label{eq:sffa}
\scA_{k, r} = \bbR_{\leq r}\langle \Ssub[1], \Ssub[2],\ldots,\Ssub[k]\rangle.
\end{align}
\end{Definition}

A natural strategy of selecting primitives is to use \glspl{DSSO} induced by $\set{\Gsub^{[i]}}_{i=1}^{k}$. For instance, their Laplacians naturally define the primitive set $\set{\Lsub^{[i]}}_{i=1}^{k}$. This choice allows the resulting filter algebra to combine multiple distance-dependent propagation patterns while keeping the primitive construction simple and fully supported on $\Vsub$.

\subsection{Expressiveness and Dimensionality Control}\label{subsec:expressiveness}

Next, we address the question of whether \gls{SFFA} can span the entire space of linear operators on $\mathbb{R}^{|\Vsub|\times|\Vsub|}$, which is pivotal for understanding the theoretical capabilities and practical applications of \gls{SFFA} in \gls{SFL}.

\begin{Theorem}[Expressiveness of \gls{SFFA}]\label{thm:full-exp}
Suppose $G$ is a connected graph, and $\Gsub$ is the induced subgraph on $\Vsub$. Define $\bE_{ij}\in\set{0,1}^{|\Vsub|\times|\Vsub|}$ as the elementary matrix with a $1$ in the $(i,j)$-th position and $0$ elsewhere. Consider the set of primitives
\begin{align*}
\calU=\set{\bE_{ii}}_{i=1}^{|\Vsub|}\cup\set{\bA_k}_{k=1}^{\diam G},
\end{align*}
where $\bA_k$ is the adjacency matrix of the distance-$k$ subgraph $\Gsub^{[k]}$. Then, the $(|\calU|,3)$-\gls{SFFA} satisfies
\begin{align*}
\dim \scA_{|\calU|, 3} = \dim \bbR_{\leq 3}\langle\calU\rangle = |\Vsub|^2.
\end{align*}
\end{Theorem}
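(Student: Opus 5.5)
The plan is to show that every elementary matrix $\bE_{ij}$, $i,j\in\{1,\dots,|\Vsub|\}$, lies in $\bbR_{\leq 3}\langle\calU\rangle$. Since the $|\Vsub|^2$ matrices $\bE_{ij}$ form a basis of $\bbR^{|\Vsub|\times|\Vsub|}$, this gives $\dim\scA_{|\calU|,3}\geq|\Vsub|^2$. The reverse inequality is trivial because the algebra is a subspace of $\bbR^{|\Vsub|\times|\Vsub|}$. Throughout, I read the free-algebra notation \cref{def:FA} as the span of words of length at most $3$ in the primitives. I will take care that every word used has length at most $3$. Shorter words either count directly under this reading, or can be padded using the identity $\bI=\sum_i\bE_{ii}$, which is already a degree-one element, together with idempotents such as $\bE_{ii}=\bE_{ii}\bE_{ii}$.

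The core construction is the sandwich identity: for any matrix $\bM$,
\begin{align*}
\bE_{ii}\bM\bE_{jj}=\bM_{ij}\,\bE_{ij}.
\end{align*}
Fix $i\neq j$, with $v_i,v_j\in\Vsub$. Because $G$ is connected, the distance $k\coloneq d_G(v_i,v_j)$ satisfies $1\leq k\leq\diam G$. By \cref{def:dist_sub}, the pair $(v_i,v_j)$ is an edge of $\Gsub^{[k]}$, so $(\bA_k)_{ij}=1$. The sandwich identity then gives $\bE_{ii}\bA_k\bE_{jj}=\bE_{ij}$, which is a product of three primitives from $\calU$ and hence lies in $\bbR_{\leq 3}\langle\calU\rangle$. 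For the diagonal case $i=j$, the matrix $\bE_{ii}$ is itself a primitive, and it can also be written as the word $\bE_{ii}\bE_{ii}\bE_{ii}$ of length three if exact length is required. Together these cover all $|\Vsub|^2$ basis elements.

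The only point that needs attention is making sure every off-diagonal pair is covered by some distance-$k$ subgraph. This is exactly where the connectivity of $G$ enters: it guarantees a finite $d_G(v_i,v_j)\le\diam G$, even when $v_i$ and $v_j$ lie in different components of $\Gsub$. The remaining issue is the indexing convention in \cref{def:FA}, where the product $\prod_{i=0}^r$ nominally has $r+1$ factors. Under either reading, words of length $\leq 3$ are included, so the argument is unaffected. With these checks in place, the dimension count follows immediately.
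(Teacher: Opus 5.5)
Your proposal is correct and follows essentially the same route as the paper: the trivial upper bound $|\Vsub|^2$, the diagonal primitives $\bE_{ii}$ themselves, and the sandwich $\bE_{ij}=\bE_{ii}\bA_d\bE_{jj}$ with $d=d_G(v_i,v_j)\le\diam G$, which is finite because $G$ is connected. Your remark about padding with idempotents such as $\bE_{ii}=\bE_{ii}\bE_{ii}$ is a small addition that the paper does not make: it keeps the argument valid under the literal $(r+1)$-factor reading of \cref{def:FA}.
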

\begin{proof}
First, it is immediate that $\dim \bbR_{\leq 3}\langle\calU\rangle\leq|\Vsub|^2$. Since $\bbR_{\leq 3}\langle\calU\rangle$ already contains all diagonal elementary matrices $\bE_{ii}$, it remains to show that it can represent every off-diagonal elementary matrix $\bE_{ij}$ for $i\neq j$, with $i,j\in\Vsub$. Because $G$ is connected, $d_G(i,j)\leq \diam G<\infty$. Let $d=d_G(i,j)$. Then $\bA_d\in\calU$, and its $(i,j)$ entry is equal to $1$. Moreover, since $\bE_{ii}$ and $\bE_{jj}$ are primitives, we have $\bE_{ij}=\bE_{ii}\bA_d\bE_{jj}$, which is represented by at most three primitives from $\calU$. Therefore,
\begin{align*}
\bbR_{\leq 3}\langle\calU\rangle \supset \spn \set{\bE_{ij}}_{i,j=1}^{|\Vsub|}.
\end{align*}
Hence,
\begin{align*}
|\Vsub|^2\geq \dim\bbR_{\leq 3}\langle\calU\rangle \geq \dim\spn \set{\bE_{ij}}_{i,j=1}^{|\Vsub|}=|\Vsub|^2,
\end{align*}
which concludes the proof.
\end{proof}

\begin{Remark}\label{rmk:lap-full-exp}
\cref{thm:full-exp} also holds if we replace the adjacency matrix with Laplacian, normalized adjacency, or normalized Laplacian matrix of $\Gsub^{[k]}$.
\end{Remark}

\cref{thm:full-exp} establishes that, by incorporating diagonal ($0$-hop) \glspl{DSSO} and all distance-based subgraph primitives, the resulting \gls{SFFA} can span the entire matrix space. 

The next step is to determine the dimension of \gls{SFFA} filter banks. Accurately determining their dimension is challenging due to potential linear dependencies among monomials (e.g., while $\bS_1$ and $\bS_2$ may not commute, monomials such as $\bS_1\bS_2\bS_1$ and $\bS_2\bS_1^3$ could still be linearly dependent), so we instead focus on the \emph{formal dimension} of the filter algebra, which provides a practical measure directly related to the number of trainable parameters when optimizing filters within the space using optimization techniques. 

\begin{Definition}[Formal Dimension]\label{def:form-dim}
The formal dimension of a $(k,r)$-\gls{SFFA} $\scA_{k,r}$ is defined as the total number of distinct monomials of degree at most $r$ generated by the primitives, including the identity.

\end{Definition}

\begin{Remark}\label{rmk:formal-dimension}
The formal dimension is an upper bound on the true dimension of the evaluated filter space, since each monomial in \gls{SFFA}, and each reversal-equivalence class in sym-\gls{SFFA}, corresponds to one evaluated matrix generator. The bound may be strict when these generators are linearly dependent. In sym-\gls{SFFA}, accidental symmetry of monomials only further reduces the actual dimension.
\end{Remark}

\begin{Theorem}\label{prop:expressive}
For $k>1$, the formal dimension of a $(k,r)$-\gls{SFFA} is  $\sum_{l=0}^r k^l = \frac{k^{r+1}-1}{k-1}.$
\end{Theorem}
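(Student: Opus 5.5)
The plan is a direct counting argument over formal words. By \cref{def:form-dim}, the formal dimension of $\scA_{k,r}$ counts \emph{formal} monomials, meaning words in the noncommuting symbols $\Ssub[1],\dots,\Ssub[k]$, and not the matrices they evaluate to. So I will identify monomials of degree exactly $l$ with ordered sequences $(i_1,\dots,i_l)\in\set{1,\dots,k}^l$, corresponding to the word $\Ssub[i_1]\Ssub[i_2]\cdots\Ssub[i_l]$. The empty word ($l=0$) is the identity. Because the algebra in \cref{eq:sffa} is free, two words are the same formal monomial only if they are equal as sequences. No commutation relations are imposed, so $\Ssub[1]\Ssub[2]$ and $\Ssub[2]\Ssub[1]$ count as distinct.

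The steps, in order, are as follows.
\begin{enumerate}
\item Fix a degree $l$. The map from sequences in $\set{1,\dots,k}^l$ to degree-$l$ words is a bijection, so there are exactly $k^l$ distinct monomials of degree $l$. This includes $k^0=1$, the identity.
\item Words of different lengths are distinct formal monomials. Hence the sets for $l=0,1,\dots,r$ are pairwise disjoint, and the total count is $\sum_{l=0}^r k^l$.
\item Since $k>1$, the geometric series formula gives the closed form $\frac{k^{r+1}-1}{k-1}$. One can check this by multiplying the sum by $k-1$ and telescoping.
\end{enumerate}

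The one point needing care is reconciling the counting with the notation in \cref{def:FA}. That notation writes the product as $\prod_{i=0}^{r}$ and spans only products with a fixed number of factors. I will read "degree at most $r$" as in \cref{def:form-dim}: all words of length $0$ through $r$, with the identity included explicitly. This is the main potential ambiguity.

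No other genuine obstacle arises. Possible linear dependencies among the evaluated matrices are irrelevant, because the statement concerns formal dimension, and \cref{rmk:formal-dimension} already notes that this is only an upper bound on the true dimension. The hypothesis $k>1$ is needed only to avoid division by zero. For $k=1$ the count is simply $r+1$.
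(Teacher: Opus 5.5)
Your proposal is correct and matches the paper's proof: you count $k^l$ formal words of each length $l$ (the empty word giving the identity), sum over $l=0,\dots,r$, and apply the geometric series for $k>1$. Your remark on the $\prod_{i=0}^{r}$ indexing in the free-algebra notation is a fair point about the paper's notation. Reading ``degree at most $r$'' via the formal-dimension definition is what the paper implicitly does.
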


\begin{proof}

For a $(k,r)$-\gls{SFFA}, a monomial of degree $l$ is formed by multiplying $l$ primitives. Since there are $k$ available choices for each factor, there are exactly $k^l$ distinct monomials of degree $l$. Summing these quantities over all degrees from $l=0$ to $r$ yields the stated formal dimension. 
\end{proof}

For example, the sum of polynomial spaces $\sum_{i=1}^k\bbR_{\leq r_i}[\bS_i]$ has a formal dimension of $1+\sum_{i=1}^kr_i$, as the identity matrix is shared. For $\bbR_{\leq 3}\langle\calU\rangle$ in \cref{thm:full-exp}, the formal dimension is $\sum_{l=0}^3(|\Vsub|+\diam G)^l$, which vastly exceeds the true dimension $|\Vsub|^2$, indicating significant linear dependence among monomials. 

In practice, to make sure the choice of $k$ and $r$ reflects the underlying graph structure, it is advised to incorporate more distance-based primitives (larger $k$) while keeping $r$ small (cf. \cref{subsubsec:param-sweep}). Conversely, for denser graphs (e.g., Erd\H{o}s–Rényi graphs with $p > 0.3$), a smaller $k$ together with a higher polynomial order $r$ is often sufficient. 

In finite-sample regimes, estimating a larger number of parameters exacerbates estimation error and introduces numerical instability. Thus, the choice of filter algebra dictates a fundamental bias-variance trade-off between approximation capacity (determined by algebraic expressiveness) and statistical efficiency (determined by the formal dimension). We formalize this trade-off in the next section.

\section{Performance Risk Bounds}\label{sec:error-bound}

In the \gls{SFL} framework, estimating an optimal filter from a prescribed filter bank $\calS(\Gsub)$ using finite samples inherently induces a bias-variance trade-off. A restricted filter class yields approximation error due to insufficient representational capacity, whereas an overly expressive class amplifies estimation error in the finite-sample regime. The following results provide formal excess-risk bounds that characterize how the approximation of the oracle operator depends on the filter-bank dimension, sample size, and data distribution. We make the following assumption throughout this section.

\begin{Assumption}\label{asm:error-bound}
The input signal $\bx$ is zero-mean, sub-Gaussian, and has a positive-definite second-moment matrix $\bM$. In addition, $\bx$ is graph wide-sense stationary\cite{Gir:C15,PerVan:J17}, i.e., $\bM$ and the graph Laplacian $\bL$ are simultaneously diagonalizable. The noise vector $\bw$ is sub-Gaussian and uncorrelated with $f(\bx)$. The mapping $f$ is globally Lipschitz: $\norm{f(\bu)}_2 \le L_f\norm{\bu}_2$, for all $\bu\in\bbR^N$, and for some constant $L_f>0$.
\end{Assumption}

Let $\calV_k$ denote the linear space of $k$-hop \glspl{DSSO}, and let $m \coloneq\dim(\calV_k)$.
Let $\set{\bQ_j}_{j=1}^m$ be a Frobenius-orthonormal basis of $\calV_k$, i.e., $\angles{\bQ_j,\bQ_{j'}}_F=\delta_{jj'}$. Every $\bF\in\calV_k$ then admits the expansion $\bF=\sum_{j=1}^m\theta_j\bQ_j$.

Let $\Fsub^*$ denote the oracle LMMSE subgraph filter. Define
\begin{gather*}
\bar{\bF}_m\coloneq \argmin_{\bF\in\calV_k} \calR_{L^2}(\bF),\\
\bar{\bF}_{m,\eta} \coloneq \argmin_{\bF\in\calV_k} \braces*{\calR_{L^2},(\bF)+\eta\norm{\bF}_F^2},
\end{gather*}
and the empirical ridge estimator
\begin{align}\label{eq:empirical-ridge-estimator}
\hat{\bF}_{m,\eta}\tc{n}
\coloneq\argmin_{\bF\in\calV_k}
\braces*{ \frac{1}{n}\sum_{i=1}^n \norm{\by_{\Vsub}^i-\bF\bx_{\Vsub}^i}_2^2 + \eta\norm{\bF}_F^2}.
\end{align}
To quantify the component of a filter that is activated by the input distribution, we introduce the $\Msub$-induced norm and inner product:
\begin{align}\label{eq:input-norm}
\norm{\bS}_{\Msub}^2\coloneq\E\norm{\bS\bx_{\Vsub}}_2^2=\tr(\bS\Msub\bS\T)
\end{align}
and
\begin{align}\label{eq:input-prod}
\angles{\bS_1,\bS_2}_{\Msub}\coloneq\E[(\bS_2\bx_\Vsub)\T(\bS_1\bx_\Vsub)]=\tr(\bS_1\Msub\bS_2\T).
\end{align}
For notational convenience, let $\kappa_{\min}\coloneq\lambdamin{\bM}$ and $\kappa_{\max}\coloneq\lambdamax{\bM}$ be the minimum and maximum eigenvalues of $\bM$, respectively. We denote by $\norm{\cdot}_{\psi_2}$ the sub-Gaussian norm \cite[Definition 2.5.6]{Ver18} and by $\norm{\cdot}_{\psi_1}$ the sub-exponential norm \cite[Definition 2.7.5]{Ver18}. 

For the coordinate representation over $\calV_k$, we also define the Gram matrix
$\Gamma\in\bbR^{m\times m}$ and the coordinate response vector $\bb\in\bbR^m$ by
\begin{align}
\Gamma_{j\ell}
&\coloneq
\angles{\bQ_j,\bQ_\ell}_{\Msub} \label{eq:coord-gram}\\
b_j
&\coloneq
\E[\by_{\Vsub}\T\bQ_j\bx_{\Vsub}],
\qquad
\bb\coloneq(b_1,\ldots,b_m)\T .
\label{eq:coord-response}
\end{align}

We now establish bounds on the least squares performance loss of the empirical ridge estimator in \cref{eq:empirical-ridge-estimator}. Proof sketches for \cref{thm:hp-ridge-decomp} and complete proofs of \cref{thm:decomposition-bounds}\ref{thm:hp-ridge-oracle} are given in \cref{sec:selected-proofs}; the remaining proofs are deferred to \cref{sec:hp-ridge} of the supplementary material. We start with a few preliminary results.

\begin{Lemma}\label{lma:gram-eig}
Under \cref{asm:error-bound}, the Gram matrix $\Gamma$ satisfies
\begin{equation}\label{eq:gram-eig}
\begin{aligned}
\kappa_{\min}\leq\lambdamin{\Msub}\leq\lambdamin{\Gamma};\\
\lambdamax{\Gamma}\leq\lambdamax{\Msub}\leq\kappa_{\max}.
\end{aligned}
\end{equation}
\end{Lemma}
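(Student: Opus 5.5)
The lemma splits into two independent pieces. The outer inequalities compare $\Msub$ with the ambient second-moment matrix $\bM$; the inner ones compare the Gram matrix $\Gamma$ with $\Msub$. Only the positive definiteness of $\bM$ from \cref{asm:error-bound} is needed; sub-Gaussianity and graph stationarity play no role.

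For the outer inequalities, first write $\Msub=\E[\Psub\bx\bx\T\Psub\T]=\Psub\bM\Psub\T$, which is the principal submatrix $\bM_{\Vsub\Vsub}$. Then apply Cauchy interlacing, or argue directly with Rayleigh quotients. For any unit vector $\bu\in\bbR^{N_0}$, the vector $\Psub\T\bu$ is a unit vector in $\bbR^N$ because $\Psub\Psub\T=\bI_{\Vsub}$. Hence
\[
\bu\T\Msub\bu=(\Psub\T\bu)\T\bM(\Psub\T\bu)\in[\kappa_{\min},\kappa_{\max}].
\]
Taking the minimum and maximum over $\bu$ gives $\kappa_{\min}\le\lambdamin{\Msub}$ and $\lambdamax{\Msub}\le\kappa_{\max}$.

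For the inner inequalities, compute the Rayleigh quotient of $\Gamma$ directly. For $\btheta\in\bbR^m$, set $\bS=\sum_j\theta_j\bQ_j\in\calV_k$. By bilinearity of the inner product in \cref{eq:input-prod},
\[
\btheta\T\Gamma\btheta=\norm{\bS}_{\Msub}^2=\tr(\bS\Msub\bS\T).
\]
Since the $\bQ_j$ are Frobenius-orthonormal, $\norm{\btheta}_2^2=\norm{\bS}_F^2$. Next, write $\tr(\bS\Msub\bS\T)=\sum_i \bs_i\T\Msub\bs_i$, where $\bs_i\T$ denotes the rows of $\bS$. Bounding each term by $\lambdamin{\Msub}\norm{\bs_i}_2^2$ from below and $\lambdamax{\Msub}\norm{\bs_i}_2^2$ from above gives
\[
\lambdamin{\Msub}\norm{\bS}_F^2\le\tr(\bS\Msub\bS\T)\le\lambdamax{\Msub}\norm{\bS}_F^2 .
\]
Dividing by $\norm{\btheta}_2^2$ and optimizing over $\btheta\neq\bzero$ yields $\lambdamin{\Msub}\le\lambdamin{\Gamma}$ and $\lambdamax{\Gamma}\le\lambdamax{\Msub}$.

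The main step is the inner bound, which hinges on the isometry between the coordinate vectors $\btheta$ and the filters $\bS$. That isometry is exactly why the orthonormal basis of $\calV_k$ was chosen.
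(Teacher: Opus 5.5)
Your proof is correct and matches the natural argument, which is presumably the one in the paper's supplementary material (not reproduced in the source above). The outer bounds follow because $\mathbf{M}_{UU}=\mathbf{P}_U\mathbf{M}\mathbf{P}_U^\top$ is a principal submatrix (Rayleigh quotients, i.e.\ interlacing), and the inner bounds follow because the Frobenius isometry $\theta\mapsto\sum_j\theta_j\mathbf{Q}_j$ makes $\Gamma$ the compression of the quadratic form $\mathbf{S}\mapsto\mathrm{tr}(\mathbf{S}\mathbf{M}_{UU}\mathbf{S}^\top)$ onto $\mathcal{V}_k$. Your remark that sub-Gaussianity and graph stationarity play no role here is also right.
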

\Cref{lma:gram-eig} establishes the numerical stability of the coordinate system induced by $\calV_k$: the eigenvalues of the Gram matrix $\Gamma$ are sandwiched between those of $\Msub$, and therefore lie within $[\kappa_{\min}, \kappa_{\max}]$. This conditioning guarantee is invoked repeatedly throughout the analysis to control the norms of inverse Gram matrices and to translate errors in the coordinate coefficients into prediction errors.

\begin{Lemma}\label{lma:b-coordinate-norm}
Under the model $\by=f(\bx)+\bw$ with $\E[\bw\bx\T]=\bzero$, the coordinate response vector $\bb$ satisfies
\begin{align}\label{eq:b-coordinate-norm}
\norm{\bb}_2^2
=
\sum_{\substack{u,v\in\Vsub\\ d_G(u,v)\leq k}} \parens*{(\bJ\bM)_{uv}}^2,
\end{align}
where $\bJ\coloneq\E[f(\bx)\bx\T]\bM^{-1}$.
\end{Lemma}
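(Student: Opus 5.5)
The plan is to write each coordinate $b_j$ as a Frobenius inner product of a single fixed matrix with $\bQ_j$. Then $\norm{\bb}_2^2$ becomes the squared Frobenius norm of the orthogonal projection of that matrix onto $\calV_k$. Finally, because $\calV_k$ is a coordinate subspace defined by a sparsity pattern, this projection simply keeps the entries indexed by pairs at distance at most $k$.

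\emph{Rewriting $b_j$.} First, using $\by_{\Vsub}\T\bQ_j\bx_{\Vsub}=\tr(\bQ_j\bx_{\Vsub}\by_{\Vsub}\T)$ and linearity of expectation,
\begin{align*}
b_j=\tr\parens*{\bQ_j\E[\bx_{\Vsub}\by_{\Vsub}\T]}=\angles{\bC,\bQ_j}_F,
\qquad \bC\coloneq\E[\by_{\Vsub}\bx_{\Vsub}\T]=\Psub\E[\by\bx\T]\Psub\T .
\end{align*}
The expectations are finite. Under \cref{asm:error-bound}, $\bx$ and $\bw$ are sub-Gaussian and $\norm{f(\bx)}_2\le L_f\norm{\bx}_2$, so all the relevant second moments exist; I would note this only briefly.

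\emph{Identifying $\bC$.} Next, I substitute $\by=f(\bx)+\bw$ and use $\E[\bw\bx\T]=\bzero$. This gives $\E[\by\bx\T]=\E[f(\bx)\bx\T]$. Since $\bM$ is positive definite, this equals $\bJ\bM$ by the definition $\bJ=\E[f(\bx)\bx\T]\bM^{-1}$. Hence $\bC=(\bJ\bM)_{\Vsub\Vsub}$, the principal $\Vsub$-block of $\bJ\bM$.

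\emph{Parseval and the projection.} Since $\set{\bQ_j}_{j=1}^m$ is a Frobenius-orthonormal basis of $\calV_k$, Parseval's identity gives
\begin{align*}
\norm{\bb}_2^2=\sum_{j=1}^m\angles{\bC,\bQ_j}_F^2=\norm{\Pi_{\calV_k}\bC}_F^2,
\end{align*}
where $\Pi_{\calV_k}$ is the Frobenius-orthogonal projector onto $\calV_k$. This quantity does not depend on which orthonormal basis is chosen.

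\emph{The main point to verify.} By \cref{def:DSSO} and \cref{cor:DSSO-traits}, $\calV_k$ is exactly the set of matrices in $\bbR^{N_0\times N_0}$ whose entries vanish whenever $d_G(u,v)>k$. It therefore has the alternative orthonormal basis $\set{\bE_{uv}\given u,v\in\Vsub,\ d_G(u,v)\le k}$, which includes the diagonal since $d_G(u,u)=0$. Computing the projection in this basis, $\Pi_{\calV_k}\bC$ keeps the entries $\bC_{uv}=(\bJ\bM)_{uv}$ with $d_G(u,v)\le k$ and zeroes out the rest. Taking the squared Frobenius norm yields \cref{eq:b-coordinate-norm}. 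Incidentally, this also gives $m=\abs{\set{(u,v)\in\Vsub\times\Vsub\given d_G(u,v)\le k}}$.
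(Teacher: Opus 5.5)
Your proposal is correct and matches the natural argument for this lemma; the paper defers its own proof to the supplement, but its remark that the lemma ``identifies $\bb$ as the $k$-hop component of the input--output cross-moment'' indicates the same route. That route is to write $b_j=\langle (\bJ\bM)_{\Vsub\Vsub},\bQ_j\rangle_F$ using $\E[\bw\bx\T]=\bzero$, then combine Frobenius-orthonormality of $\{\bQ_j\}$ with the fact that $\calV_k$ is exactly the coordinate subspace supported on pairs with $d_G(u,v)\le k$. Your observation that $\norm{\bb}_2^2=\norm{\Pi_{\calV_k}\bC}_F^2$ does not depend on the chosen orthonormal basis, so it can be evaluated with $\{\bE_{uv}\}$, is what makes the identity hold for an arbitrary basis $\{\bQ_j\}$.
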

\Cref{lma:b-coordinate-norm} connects the ridge analysis to graph distance. Specifically, it identifies $\bb$ as the $k$-hop component of the input--output cross-moment; consequently, only correlations between node pairs with $d_G(u,v)\leq k$ contribute to the coordinate response. This is precisely where the graph-local constraint of \gls{SFL} enters the subsequent bounds.

\begin{Theorem}[Ridge excess risk decomposition]\label{thm:hp-ridge-decomp}
Suppose \cref{asm:error-bound} holds. Then, for any $\eta>0$,
\begin{multline}
\E\norm{\Psub f(\bx)-\hat{\bF}_{m,\eta}\tc{n}\bx_{\Vsub}}_2^2 \\
\leq
\calR_{L^2}(\Fsub^*) 
+ \norm{\Fsub^*-\bar{\bF}_m}_{\Msub}^2  
+ 2\norm{\bar{\bF}_m-\bar{\bF}_{m,\eta}}_{\Msub}^2 \\
+ 2\norm{\bar{\bF}_{m,\eta}-\hat{\bF}_{m,\eta}\tc{n}}_{\Msub}^2. \label{eq:hp-ridge-full}
\end{multline}
\end{Theorem}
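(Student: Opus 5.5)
The plan is to condition on the training sample, so that $\hat{\bF}_{m,\eta}\tc{n}$ is a fixed matrix in $\calV_k$, and to take the expectation only over a fresh pair $(\bx,\bw)$ drawn independently of the sample. The bound then follows from two Pythagorean identities in the $\Msub$-inner product \cref{eq:input-prod}, followed by one elementary inequality. Since $\bM\succ 0$ by \cref{asm:error-bound}, its principal submatrix $\Msub$ is positive definite. Hence $\Fsub^*$ in \cref{eq:oracle-mse} is well defined, and $\norm{\cdot}_{\Msub}$ is a genuine norm.

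\emph{Step 1 (projection onto all linear maps).} By \cref{eq:oracle-mse}, $\E[(\Psub f(\bx)-\Fsub^*\bx_{\Vsub})\bx_{\Vsub}\T]=\bzero$. For any fixed $\bS\in\bbR^{N_0\times N_0}$, I will expand $\Psub f(\bx)-\bS\bx_{\Vsub}=(\Psub f(\bx)-\Fsub^*\bx_{\Vsub})+(\Fsub^*-\bS)\bx_{\Vsub}$. The cross term vanishes by this orthogonality, which gives
\begin{align*}
\E\norm{\Psub f(\bx)-\bS\bx_{\Vsub}}_2^2=\E\norm{\Psub f(\bx)-\Fsub^*\bx_{\Vsub}}_2^2+\norm{\Fsub^*-\bS}_{\Msub}^2 .
\end{align*}
Next I compare with the risk. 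Write $\by_{\Vsub}-\Fsub^*\bx_{\Vsub}=(\Psub f(\bx)-\Fsub^*\bx_{\Vsub})+\Psub\bw$. The noise is zero-mean and uncorrelated with both $f(\bx)$ and $\bx$, so the cross term again vanishes. Therefore $\calR_{L^2}(\Fsub^*)=\E\norm{\Psub f(\bx)-\Fsub^*\bx_{\Vsub}}_2^2+\E\norm{\Psub\bw}_2^2\geq \E\norm{\Psub f(\bx)-\Fsub^*\bx_{\Vsub}}_2^2$. The same computation shows that $\calR_{L^2}(\bF)=\calR_{L^2}(\Fsub^*)+\norm{\Fsub^*-\bF}_{\Msub}^2$ for every $\bF$.

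\emph{Step 2 (projection onto $\calV_k$).} By the last identity in Step 1, $\bar{\bF}_m$ minimizes $\norm{\Fsub^*-\bF}_{\Msub}^2$ over the linear subspace $\calV_k$. It is therefore the $\Msub$-orthogonal projection of $\Fsub^*$ onto $\calV_k$; uniqueness follows from $\Msub\succ0$, or equivalently from $\Gamma\succ0$ by \cref{lma:gram-eig}. Because $\hat{\bF}_{m,\eta}\tc{n}\in\calV_k$, we have $\bar{\bF}_m-\hat{\bF}_{m,\eta}\tc{n}\in\calV_k$, which is $\Msub$-orthogonal to $\Fsub^*-\bar{\bF}_m$. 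Hence
\begin{align*}
\norm{\Fsub^*-\hat{\bF}_{m,\eta}\tc{n}}_{\Msub}^2=\norm{\Fsub^*-\bar{\bF}_m}_{\Msub}^2+\norm{\bar{\bF}_m-\hat{\bF}_{m,\eta}\tc{n}}_{\Msub}^2 .
\end{align*}

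\emph{Step 3 (splitting the estimation term).} Insert $\bar{\bF}_{m,\eta}$ and apply $\norm{\ba+\bc}^2\leq2\norm{\ba}^2+2\norm{\bc}^2$ in the $\Msub$-norm. This yields $\norm{\bar{\bF}_m-\hat{\bF}_{m,\eta}\tc{n}}_{\Msub}^2\leq 2\norm{\bar{\bF}_m-\bar{\bF}_{m,\eta}}_{\Msub}^2+2\norm{\bar{\bF}_{m,\eta}-\hat{\bF}_{m,\eta}\tc{n}}_{\Msub}^2$. Applying Step 1 with $\bS=\hat{\bF}_{m,\eta}\tc{n}$, then Step 2, then this inequality, gives \cref{eq:hp-ridge-full}.

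No step here is difficult. The only point needing care is the orthogonality bookkeeping: the expectation must be over a fresh $\bx$ independent of the training data, so that $\hat{\bF}_{m,\eta}\tc{n}$ can be treated as deterministic. I also need that $\bw$ is uncorrelated with $\bx$ and not merely with $f(\bx)$. This is exactly what lets $\Fsub^*$ serve simultaneously as the projection of $\Psub\by$ and of $\Psub f(\bx)$, and makes the noise enter only through the irreducible term $\calR_{L^2}(\Fsub^*)$. The substantive work, namely bounding the last two terms of \cref{eq:hp-ridge-full} through \cref{lma:gram-eig,lma:b-coordinate-norm} and sub-Gaussian concentration, is separate from this decomposition.
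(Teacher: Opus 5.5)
Your proposal is correct and follows essentially the same route as the paper: a Pythagorean identity in the $\Msub$-inner product around $\Fsub^*$ over all of $\bbR^{N_0\times N_0}$, a second one around $\bar{\bF}_m$ over $\calV_k$, and then the factor-$2$ split through $\bar{\bF}_{m,\eta}$. The only difference is the order of steps. You decompose the noiseless error first and then bound its irreducible part by $\calR_{L^2}(\Fsub^*)$, whereas the paper first bounds the left-hand side by $\calR_{L^2}(\hat{\bF}_{m,\eta}\tc{n})$ and then decomposes the risk; your explicit conditioning on the training sample also spells out what the paper's sketch leaves implicit.
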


\Cref{thm:hp-ridge-decomp} decomposes the prediction risk into four components:
\begin{itemize}
\item \emph{Performance error} $\calR_{L^2}(\Fsub^*)$: the irreducible discrepancy between $\Psub f(\bx)$ and the best linear predictor based on $\bx_{\Vsub}$; this term is independent of both the filter-bank class and the sample size.

\item \emph{Approximation error} $\norm{\Fsub^*-\bar{\bF}_m}_{\Msub}^2$: the error induced by constraining the oracle LMMSE filter to the $k$-hop subspace $\calV_k$, i.e., the component of $\Fsub^*$ that cannot be represented in $\calV_k$. This term decreases as $k$ increases.

\item \emph{Regularization bias} $2\norm{\bar{\bF}_m-\bar{\bF}_{m,\eta}}_{\Msub}^2$: the bias introduced by ridge regularization; it vanishes in the limit $\eta\to0^+$.

\item \emph{Empirical estimation error} $2\norm{\bar{\bF}_{m,\eta}-\hat{\bF}_{m,\eta}\tc{n}}_{\Msub}^2$: finite-sample deviation of the empirical estimator from its population counterpart. This term typically decreases with $n$ and may increase with the filter dimension $m$; hence, enlarging $\calV_k$ can reduce approximation error while potentially increasing estimation error.
\end{itemize}

In what follows, we derive bounds for each error component as functions of the filter space $\calV_k$, through the following quantities:
\begin{itemize}
    \item $\sigma_\Vsub^2 \coloneq \E\norm{\Psub\parens*{f(\bx)-\bJ\bx}}_2^2$ measures the intrinsic residual of $f$ after its oracle linear approximation $\bJ\bx$, and thus captures the performance error that is independent of the choice of $\calV_k$.

    \item $N_{>k}(\Vsub;G)\coloneq\abs*{\set{(u,v)\in\Vsub\times\Vsub\given d_G(u,v)>k}}$, defined as the number of \emph{ordered} node pairs in $\Vsub$ that are more than $k$ hops apart in $G$, quantifies the number of within-subgraph filter entries excluded by the $k$-hop support constraint defining $\calV_k$.

    \item $s_\partial \coloneq\max_{u\in\Vsub,v\in\Vsubc}\abs*{\bJ_{uv}}$, with $s_\partial=0$ when $\Vsub=\varnothing$ or $\Vsubc=\varnothing$, quantifies the strength of boundary dependence from $\Vsubc$ to $\Vsub$, and captures the performance error induced by boundary propagation between $\Vsub$ and $\Vsubc$.

    \item $s_c\coloneq\max_{u,v\in\Vsub,d_G(u,v)>k}\abs*{\bJ_{uv}}$, with $s_c=0$ when $N_{>k}(\Vsub;G)=0$, quantifies the strength of within-subgraph oracle interactions that lie outside the representable $k$-hop support of $\calV_k$.
   
\end{itemize}

\begin{Theorem}\label{thm:decomposition-bounds}
Suppose \cref{asm:error-bound} holds. Then, we have the following bounds:
\begin{enumerate}[(i)]
\item\label[bound]{thm:hp-ridge-oracle} 
The intrinsic performance error
\begin{align}
\calR_{L^2}(\Fsub^*)\leq\sigma_{\Vsub}^2+\kappa_{\max}|\Vsub||\Vsubc|s_{\partial}^2.
\end{align}

\item\label[bound]{thm:hp-ridge-approx} 
The approximation error
\begin{align}
\ml{\norm{\Fsub^*-\bar{\bF}_m}_{\Msub}^2\\
\leq\kappa_{\max}N_{>k}(\Vsub;G)\left(s_c+s_{\partial}\norm{\bM_{\Vsubc\Vsub}\Msub^{-1}}_1\right)^2.}
\end{align}

\item\label[bound]{thm:hp-ridge-bias} 
The ridge regularization error
\begin{align}\label{eq:hp-ridge-bias}
\ml{\norm{\bar{\bF}_m-\bar{\bF}_{m,\eta}}_{\Msub}^2 \\
\leq \frac{\eta^2}{\kappa_{\min}(\kappa_{\min}+\eta)^2}\sum_{\substack{u,v\in\Vsub\\d_G(u,v)\leq k}}\parens*{\bJ\bM}_{uv}^2.}
\end{align}

\item\label[bound]{thm:hp-ridge-estimation}
For any $\beta\in(0,1)$, if $n$ is sufficiently large, then with probability at least $1-\beta$, the estimation error is bounded as
\begin{align*}
\ml{\norm{\bar{\bF}_{m,\eta}-\hat{\bF}_{m,\eta}\tc{n}}_{\Msub}\\
\leq\frac{\sqrt{\kappa_{\max}}}{\kappa_{\min}/2+\eta}
\bigg[\calO \parens*{\sqrt{\frac{m\log(m/\beta)}{n}}}\\
+\calO \parens*{m\sqrt{\frac{\log(m/\beta)}{n}}}\Theta_\eta\bigg],}
\end{align*}
where $\calO(\cdot)$ is the big-O notation, and
\begin{align*}
\Theta_\eta\coloneq
\frac{1}{\kappa_{\min}+\eta}\parens*{\sum_{\substack{u,v\in\Vsub \\ d_G(u,v)\leq k}}\parens*{\bJ\bM}_{uv}^2}^{\ofrac{2}}.
\end{align*}

\end{enumerate}
\end{Theorem}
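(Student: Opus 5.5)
The plan is to treat the four bounds separately. Each one reduces to linear algebra in the $\Msub$-geometry, combined with the conditioning from \cref{lma:gram-eig} and the $k$-hop support identity in \cref{lma:b-coordinate-norm}. The key structural fact is that $\Psub f(\bx)=\Psub\bJ\bx+\Psub(f(\bx)-\bJ\bx)$. Here $\bJ\bx$ is the full-graph LMMSE predictor, so the residual $f(\bx)-\bJ\bx$ is uncorrelated with $\bx$, and in particular with $\bx_{\Vsub}$.

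\textbf{Bound (i).} Since $\Fsub^*$ minimizes $\E\|\Psub\by-\bF\bx_{\Vsub}\|^2$ over all linear $\bF$, I will compare it with the feasible choice $\bF=\bJ_{\Vsub\Vsub}$. Write $\Psub\bJ\bx=\bJ_{\Vsub\Vsub}\bx_{\Vsub}+\bJ_{\Vsub\Vsubc}\bx_{\Vsubc}$. The residual is orthogonal to $\bx$, so the risk splits as
\[
\sigma_{\Vsub}^2+\E\|\bJ_{\Vsub\Vsubc}\bx_{\Vsubc}\|^2
\]
(plus the noise term, which the risk in the statement excludes when it is measured against $f$). The second term is $\tr(\bJ_{\Vsub\Vsubc}\bM_{\Vsubc\Vsubc}\bJ_{\Vsub\Vsubc}\T)\le\kappa_{\max}\|\bJ_{\Vsub\Vsubc}\|_F^2\le\kappa_{\max}|\Vsub||\Vsubc|s_\partial^2$.

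\textbf{Bound (ii).} The oracle has the explicit form $\Fsub^*=\bJ_{\Vsub\Vsub}+\bJ_{\Vsub\Vsubc}\bM_{\Vsubc\Vsub}\Msub^{-1}$, because $\E[\Psub f(\bx)\bx_{\Vsub}\T]=\Psub\bJ\bM\Psub\T$. The filter $\bar{\bF}_m$ is the $\Msub$-orthogonal projection onto $\calV_k$. I will therefore bound the approximation error by the error of the feasible competitor obtained by masking $\Fsub^*$ to the entries with $d_G(u,v)\le k$. The remaining part is supported on the $N_{>k}$ far pairs. Its entries are bounded by
\[
s_c+\big|(\bJ_{\Vsub\Vsubc}\bM_{\Vsubc\Vsub}\Msub^{-1})_{uv}\big|\le s_c+s_\partial\|\bM_{\Vsubc\Vsub}\Msub^{-1}\|_1,
\]
using the max-column-sum norm. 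Applying $\|\cdot\|_{\Msub}^2\le\kappa_{\max}\|\cdot\|_F^2$ then gives the bound. I will need to check carefully that the induced-norm convention gives the correct column or row sum.

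\textbf{Bound (iii).} In coordinates, $\bar\theta=\Gamma^{-1}\bb$ and $\bar\theta_\eta=(\Gamma+\eta\bI)^{-1}\bb$. Their difference is $\eta\,\Gamma^{-1}(\Gamma+\eta\bI)^{-1}\bb$, and its $\Msub$-norm squared is
\[
\eta^2\,\bb\T(\Gamma+\eta\bI)^{-1}\Gamma^{-1}(\Gamma+\eta\bI)^{-1}\bb\le\frac{\eta^2}{\kappa_{\min}(\kappa_{\min}+\eta)^2}\|\bb\|_2^2,
\]
by \cref{lma:gram-eig}. Substituting \cref{lma:b-coordinate-norm} for $\|\bb\|_2^2$ finishes this bound.

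\textbf{Bound (iv).} This is the main obstacle. Write $\hat\Gamma,\hat\bb$ for the empirical counterparts of $\Gamma,\bb$. Then
\[
\hat\theta-\bar\theta_\eta=(\hat\Gamma+\eta\bI)^{-1}\big[(\hat\bb-\bb)-(\hat\Gamma-\Gamma)\bar\theta_\eta\big].
\]
The entries of $\hat\Gamma-\Gamma$ and $\hat\bb-\bb$ are averages of centered products of sub-Gaussian variables, which are sub-exponential. Bernstein's inequality with a union bound over the $O(m^2)$ entries gives $\|\hat\bb-\bb\|_2=\calO(\sqrt{m\log(m/\beta)/n})$ and $\|\hat\Gamma-\Gamma\|_{\mathrm{op}}\le\|\hat\Gamma-\Gamma\|_F=\calO(m\sqrt{\log(m/\beta)/n})$. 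For large enough $n$, the second estimate is at most $\kappa_{\min}/2$, so $\|(\hat\Gamma+\eta\bI)^{-1}\|\le(\kappa_{\min}/2+\eta)^{-1}$. Moreover $\|\bar\theta_\eta\|_2\le\|\bb\|/(\kappa_{\min}+\eta)=\Theta_\eta$. Converting back with $\|\cdot\|_{\Msub}\le\sqrt{\kappa_{\max}}\|\theta\|_2$ gives the stated form. The delicate part is getting the right $\psi_1$ constants, since $\bQ_j$ may not be bounded uniformly. Frobenius-normalization together with $\|\bQ_j\bx_{\Vsub}\|_{\psi_2}\lesssim\|\bx\|_{\psi_2}$ is what I would use to absorb them into the $\calO$.
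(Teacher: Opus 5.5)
Your proposal is correct. For (ii)--(iv), your arguments are the ones that produce exactly the constants in the statement:
\begin{itemize}
\item in (ii), the masked copy of $\Fsub^*$ serves as a competitor in $\calV_k$;
\item in (iii), you combine $\bar\theta_\eta=(\Gamma+\eta\bI)^{-1}\bb$ with \cref{lma:gram-eig,lma:b-coordinate-norm};
\item in (iv), you use the perturbation identity, Weyl's inequality $\lambdamin{\hat\Gamma}\geq\kappa_{\min}/2$, and entrywise Bernstein bounds.
\end{itemize}
Your convention worry in (ii) resolves in your favour. The $(u,v)$ entry of $\bJ_{\Vsub\Vsubc}\bM_{\Vsubc\Vsub}\Msub^{-1}$ sums down column $v$ of $\bM_{\Vsubc\Vsub}\Msub^{-1}$, and the induced $\ell_1$ norm is the maximum absolute column sum. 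In (iv), two details fill in your sketch:
\begin{itemize}
\item For the entries of $\hat\bb-\bb$ to be averages of sub-exponential variables, you need $\norm{\by_\Vsub}_2$ to be sub-Gaussian. This is exactly what the growth condition $\norm{f(\bu)}_2\le L_f\norm{\bu}_2$ provides.
\item For $\angles{\bQ_j\bx_\Vsub,\bQ_\ell\bx_\Vsub}$, write it as a sum over rows and apply Cauchy--Schwarz with $\norm{\bQ_j}_F=\norm{\bQ_\ell}_F=1$. This gives a $\psi_1$ bound that is uniform in $j$ and $\ell$.
\end{itemize}

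The one genuine difference from the paper is in (i). The paper first derives $\Fsub^*=\bJ_{\Vsub\Vsub}+\bJ_{\Vsub\Vsubc}\bM_{\Vsubc\Vsub}\Msub^{-1}$. It then writes the residual as $\bJ_{\Vsub\Vsubc}\br_{\Vsubc|\Vsub}+\Psub\bepsilon(\bx)$ and obtains the exact identity
\[
\calR_{L^2}(\Fsub^*)=\sigma_{\Vsub}^2+\tr\bigl(\bJ_{\Vsub\Vsubc}(\bM_{\Vsubc\Vsubc}-\bM_{\Vsubc\Vsub}\Msub^{-1}\bM_{\Vsub\Vsubc})\bJ_{\Vsub\Vsubc}\T\bigr).
\]
Only at the end does it relax the Schur complement to $\bM_{\Vsubc\Vsubc}$. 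You bypass both the explicit oracle and the Schur complement by inserting the feasible filter $\bJ_{\Vsub\Vsub}$ into the LMMSE optimality of $\Fsub^*$. That lands in one step on the already-relaxed quantity $\sigma_\Vsub^2+\tr(\bJ_{\Vsub\Vsubc}\bM_{\Vsubc\Vsubc}\bJ_{\Vsub\Vsubc}\T)$.

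Your route is shorter and needs only the orthogonality of $\bepsilon(\bx)=f(\bx)-\bJ\bx$ to $\bx$. The paper's route gives an exact characterization of the intrinsic error through the conditional covariance of $\bx_\Vsubc$ given $\bx_\Vsub$. That quantity can be much smaller than the stated bound when $\bx_\Vsubc$ is well predicted linearly from $\bx_\Vsub$. The paper's route also produces the explicit form of $\Fsub^*$, which you need in (ii) anyway.

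Your parenthetical about the noise term matches the paper. Its proof of (i) likewise measures the risk against $\Psub f(\bx)$ rather than $\Psub\by$.
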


As shown in \cref{thm:decomposition-bounds}, increasing $k$ can reduce the approximation error but enlarges the filter-space $\calV_k$ with dimension $m$, potentially increasing the estimation error in finite samples. Likewise, decreasing $\eta$ reduces the regularization bias but may increase the estimation error. The overall risk therefore balances the approximation error, regularization bias, and estimation error through $k$, $m$, $n$, and $\eta$.

The first two bounds, \cref{thm:hp-ridge-oracle,thm:hp-ridge-approx}, characterize the performance error and approximation error specific to \gls{SFL}. The former quantifies the performance error from predicting with only $\bx_{\Vsub}$, while the latter quantifies the approximation error introduced by enforcing $k$-hop support. Together, they relate these errors to subgraph restriction, graph distance, and the input-output distribution.

The remaining bounds, \cref{thm:hp-ridge-bias,thm:hp-ridge-estimation}, characterize the regularization bias and estimation error within a fixed filter space. They quantify, respectively, the effect of ridge regularization and the finite-sample deviation governed by $n$, $m$, $\eta$, and data conditioning.

\section{Numerical Experiments}\label{sec:experiments}

In this section, we apply \gls{SFL} in filtering, reconstruction and prediction tasks.\footnote{The code is available at \url{https://github.com/temperierte/DSSO}.} We compare the proposed \gls{SFFA}-based \gls{SFL} filters with data-agnostic filters, structure-agnostic filters, and polynomial-based filters.

\subsection{Subgraph Filter Banks and Baselines}\label{subsec:exp-filter-banks}

\begin{table}[!htb]
\centering
\caption{Primitives used in experiments.}
\begin{tabular}{p{0.25\columnwidth} p{0.65\columnwidth}}
\hline
Primitives on $\Gsub$ & Description \\
\hline
$\tbL_{\mathrm{ind}}$ 
& the normalized induced subgraph Laplacian on $\Gsub$ \\

$\tbL_{\mathrm{Kron}}$ 
& the normalized Kron-reduced Laplacian on $\Vsub$ derived from $G$ \\

$\tbL_{\mathrm{rand},i}$, $i=1,2,3,\ldots$ 
& normalized Laplacians from independently sampled $p_0=0.9$ induced sub-subgraphs of $\Vsub$, zero-padded to $\Vsub$ \\

$\tbL_{\Vsub}^{\cup_k}$ 
& the normalized Laplacian of the union of the distance-$1,2,3,\ldots,k$ subgraphs \\

$\tbL_{\Vsub}^{[k]}$ 
& normalized distance-$k$ subgraph Laplacian \\
\hline
\end{tabular}
\vspace{-3mm}
\label{tab:sfl-filtering-primitives}
\end{table}

The primitives used to construct filter banks are listed in \cref{tab:sfl-filtering-primitives}. Recall that $\tbL$ denotes the normalized version of $\bL$. Given the number of primitives $k$ and the maximum order $r$, we describe baseline \gls{SFL} methods and our proposed \gls{SFFA} below. 

\begin{itemize}
\item \emph{Numerical LMMSE} in \cref{eq:lmmse} corresponds to optimization over the maximal subgraph filter space $\mathbb{R}^{N_0\times N_0}$ with $N_0^2$ trainable parameters, and is included only as a reference.
\item \emph{Single-\gls{GSO} polynomials} denoted as $\bbR_{\leq r}[\tbL_{\mathrm{ind}}],\ 
\bbR_{\leq r}[\tbL_{\mathrm{Kron}}],\ 
\bbR_{\leq r}[\tbL_\Vsub^{\cup_k}]$, which are order-$r$ polynomial filters of different \glspl{GSO}. 
\item \emph{Distance-based subgraph Laplacian polynomials} denoted as $\bbR_{\leq r}[\tbL_\Vsub^{[1]},\tbL_\Vsub^{[2]},\ldots,\tbL_\Vsub^{[k]}]$, cf.\ \cref{subsec:algebra-prelim}.
\item \emph{Random-group Laplacian algebra.} We use $\bbR_{\leq r}\langle\tbL_{\mathrm{rand},1},\tbL_{\mathrm{rand},2},\ldots,\tbL_{\mathrm{rand},k}\rangle$, where $\tbL_{\mathrm{rand},i}$ denotes the normalized Laplacian of the induced subgraph of $\Gsub$ obtained by sampling a $\gamma_0=0.9$ fraction of nodes from $\Vsub$, zero-padded to $\Vsub$, for $i=1,\dots,k$. It is a baseline to build up a matrix algebra structure with randomized subgraph connections.
\item \emph{Distance-based subgraph Laplacian \gls{SFFA}.} Our proposed method $\bbR_{\leq r}\langle\tbL_\Vsub^{[1]},\tbL_\Vsub^{[2]},\ldots,\tbL_\Vsub^{[k]}\rangle$ uses normalized distance-based subgraph Laplacians and the \gls{SFFA} filter bank structure.
\end{itemize}

\subsection{Semi-Synthetic Subgraph Filtering}\label{subsec:exp-sfl}

We use the graph $G$ and signals $\bx$ from the METR-LA dataset \cite{li18}. After removing edge directions, the resulting graph contains $N=207$ nodes. We retain only the time stamps with complete sensor readings and split the data chronologically using a 7:3 ratio for train and test, respectively. 

We evaluate the behavior of different subgraph filter banks in a setting with a nonlinear ground-truth function, non-Gaussian skewed noise, and a nontrivial loss function. To construct the output $\by$, we let
\begin{align*}
f(\bx)
&= 20(\bI-\tbL)^2\brk*{\parens*{\tbL\bx}\odot\tanh\parens*{\frac{\tbL^2\bx}{\sqrt{\frac{1}{N}\norm{\tbL^2\bx}_2^2}}}};\\
\bw
&=20\varepsilon;\\
\calL(\hat\by_\Vsub,\by_\Vsub)
&= \frac{1}{N_0}\sum_{j=1}^{N_0}\ell_{\tau,\delta}(\hat y_j-y_j),
\end{align*}
where
\begin{gather*}
\varepsilon = \frac{(B_i-\rho)}{\sqrt{\rho(1-\rho)}},\quad
B \sim\mathrm{Bernoulli}(\rho),\quad
\rho=0.8;\\
\ml{
\ell_{\tau,\delta}(u)
= \delta \cdot \bigg(
\tau\ln\parens*{1+\exp\parens*{\frac{u}{\delta}}}\\
{}+(1-\tau)\ln\parens*{1+\exp\parens*{-\frac{u}{\delta}}}
\bigg),
\quad \tau=0.8,\quad \delta=0.1.
}
\end{gather*}

For each trial of \gls{SFL} filtering, we randomly sample a subset $\Vsub\subset V$ containing $N_0=\gamma N$ nodes to define the subgraph $\Gsub$, with $\gamma=0.1,0.2,\ldots,0.9$.
We instantiate the subgraph filter bank according to \cref{subsec:exp-filter-banks}. To control the maximum size of the \gls{SFFA}-based filter bank, we set the parameter budget to $\alpha N_0^\beta$ with $\alpha=1$ and $\beta=1.5$ as specified in \cref{eq:budget}. To determine the $(k,r)$ pair for the \gls{SFFA}-based filter for each given $N_0$, we fix $r=3$ (which provides non-trivial filtering without over-relying on distance-based subgraph Laplacians) and maximize $k$. 
We compare the proposed distance-based subgraph Laplacian \gls{SFFA}-based filters against other \gls{SFL} methods using the same loss function for training.

\begin{table*}[!t]
\centering
\caption{Test loss with standard deviation for semi-synthetic subgraph filtering on METR-LA with parameter budget $1\cdot N_0^{1.5}$, with the \red{best}, \blue{second}, and \cyan{third} best performance highlighted.}
\label{tab:exp_sfl_filtering}
\setlength{\tabcolsep}{4pt}
\renewcommand{\arraystretch}{1.08}

\resizebox{\textwidth}{!}{%
\begin{tabular}{lccccccccc}
\toprule
Subgraph ratio
& $\gamma=0.1$
& $\gamma=0.2$
& $\gamma=0.3$
& $\gamma=0.4$
& $\gamma=0.5$
& $\gamma=0.6$
& $\gamma=0.7$
& $\gamma=0.8$
& $\gamma=0.9$ \\
$(k,r)$
& $(4,3)$
& $(6,3)$
& $(7,3)$
& $(8,3)$
& $(9,3)$
& $(10,3)$
& $(11,3)$
& $(12,3)$
& $(13,3)$ \\
\midrule
Numerical LMMSE
& $20.06\pm0.87$
& $\third{18.14\pm0.87}$
& $\third{16.58\pm0.69}$
& $\third{15.37\pm0.52}$
& $\third{14.61\pm0.54}$
& $\second{14.01\pm0.44}$
& $\second{13.66\pm0.33}$
& $\second{13.16\pm0.17}$
& $\second{12.90\pm0.14}$ \\
\midrule
$\bbR_{\leq r}[\tbL_{\mathrm{ind}}]$
& $20.14\pm1.43$
& $20.69\pm1.22$
& $20.58\pm1.01$
& $20.55\pm0.72$
& $20.49\pm0.53$
& $20.34\pm0.45$
& $20.51\pm0.34$
& $20.50\pm0.15$
& $20.42\pm0.13$ \\
$\bbR_{\leq r}[\tbL_{\mathrm{Kron}}]$
& $19.93\pm1.52$
& $20.31\pm1.22$
& $20.38\pm0.95$
& $20.41\pm0.63$
& $20.40\pm0.47$
& $20.28\pm0.44$
& $20.47\pm0.30$
& $20.47\pm0.15$
& $20.41\pm0.13$ \\
$\bbR_{\leq r}[\tbL_\Vsub^{\cup_k}]$
& $19.58\pm1.55$
& $20.02\pm1.15$
& $19.92\pm0.89$
& $19.56\pm0.65$
& $19.47\pm0.48$
& $19.42\pm0.47$
& $19.54\pm0.34$
& $19.48\pm0.16$
& $19.39\pm0.11$ \\
\midrule
$\bbR_{\leq r}[\tbL_\Vsub^{[1]},\tbL_\Vsub^{[2]},\ldots,\tbL_\Vsub^{[k]}]$
& $\third{18.69\pm1.60}$
& $18.31\pm0.79$
& $18.24\pm0.90$
& $17.98\pm0.76$
& $17.35\pm0.56$
& $17.06\pm0.49$
& $17.03\pm0.27$
& $17.05\pm0.19$
& $17.00\pm0.16$ \\
$\bbR_{\leq r}\langle\tbL_{\mathrm{rand},1},\tbL_{\mathrm{rand},2},\ldots,\tbL_{\mathrm{rand},k}\rangle$
& $\second{18.01\pm1.88}$
& $\second{15.07\pm0.86}$
& $\second{14.55\pm0.70}$
& $\second{14.38\pm0.51}$
& $\second{14.23\pm0.33}$
& $\third{14.16\pm0.51}$
& $\third{14.05\pm0.32}$
& $\third{14.01\pm0.47}$
& $\third{14.01\pm0.34}$ \\
\midrule
$\bbR_{\leq r}\langle\tbL_\Vsub^{[1]},\tbL_\Vsub^{[2]},\ldots,\tbL_\Vsub^{[k]}\rangle$
& $\first{14.51\pm0.69}$
& $\first{13.97\pm0.40}$
& $\first{13.58\pm0.35}$
& $\first{13.29\pm0.28}$
& $\first{13.16\pm0.25}$
& $\first{13.01\pm0.32}$
& $\first{12.97\pm0.35}$
& $\first{12.93\pm0.23}$
& $\first{12.81\pm0.14}$ \\
\bottomrule
\end{tabular}%
}
\vspace{-3mm}
\end{table*}

\begin{table*}[!t]
\centering
\caption{Holm-Bonferroni-adjusted $p$-values of significance test over proposed \gls{SFFA} in subgraph filtering task.}
\label{tab:exp_sfl_filtering_pval}
\setlength{\tabcolsep}{2pt}

\resizebox{\textwidth}{!}{%
\begin{tabular}{lccccccccc}
\toprule
Subgraph ratio
& $\gamma=0.1$
& $\gamma=0.2$
& $\gamma=0.3$
& $\gamma=0.4$
& $\gamma=0.5$
& $\gamma=0.6$
& $\gamma=0.7$
& $\gamma=0.8$
& $\gamma=0.9$ \\
$(k,r)$
& $(4,3)$
& $(6,3)$
& $(7,3)$
& $(8,3)$
& $(9,3)$
& $(10,3)$
& $(11,3)$
& $(12,3)$
& $(13,3)$ \\
\midrule
$\bbR_{\leq r}[\tbL_{\mathrm{ind}}]$
& $1.22{\times}10^{-7}$
& $3.51{\times}10^{-8}$
& $3.81{\times}10^{-9}$
& $8.40{\times}10^{-10}$
& $4.92{\times}10^{-11}$
& $7.65{\times}10^{-12}$
& $4.53{\times}10^{-12}$
& $1.52{\times}10^{-14}$
& $1.78{\times}10^{-17}$ \\
$\bbR_{\leq r}[\tbL_{\mathrm{Kron}}]$
& $3.30{\times}10^{-7}$
& $3.29{\times}10^{-8}$
& $3.05{\times}10^{-9}$
& $4.55{\times}10^{-10}$
& $2.32{\times}10^{-11}$
& $4.16{\times}10^{-12}$
& $1.80{\times}10^{-12}$
& $1.18{\times}10^{-14}$
& $1.67{\times}10^{-17}$ \\
$\bbR_{\leq r}[\tbL_\Vsub^{\cup_k}]$
& $3.16{\times}10^{-7}$
& $2.93{\times}10^{-8}$
& $3.11{\times}10^{-9}$
& $1.19{\times}10^{-9}$
& $4.61{\times}10^{-11}$
& $1.10{\times}10^{-11}$
& $6.32{\times}10^{-12}$
& $3.14{\times}10^{-14}$
& $1.38{\times}10^{-17}$ \\
\midrule
$\bbR_{\leq r}[\tbL_\Vsub^{[1]},\tbL_\Vsub^{[2]},\ldots,\tbL_\Vsub^{[k]}]$
& $1.40{\times}10^{-6}$
& $1.94{\times}10^{-8}$
& $8.53{\times}10^{-8}$
& $1.95{\times}10^{-8}$
& $9.13{\times}10^{-9}$
& $2.45{\times}10^{-10}$
& $9.47{\times}10^{-11}$
& $1.30{\times}10^{-12}$
& $1.81{\times}10^{-15}$ \\
$\bbR_{\leq r}\langle\tbL_{\mathrm{rand},1},\tbL_{\mathrm{rand},2},\ldots,\tbL_{\mathrm{rand},k}\rangle$
& $3.29{\times}10^{-5}$
& $2.17{\times}10^{-3}$
& $4.94{\times}10^{-4}$
& $2.72{\times}10^{-5}$
& $1.85{\times}10^{-6}$
& $4.87{\times}10^{-7}$
& $8.20{\times}10^{-6}$
& $7.40{\times}10^{-6}$
& $8.10{\times}10^{-7}$ \\
\bottomrule
\end{tabular}%
}
\vspace{-3mm}
\end{table*}

From \cref{tab:exp_sfl_filtering}, we have the following observations:
\begin{itemize}
\item The numerical LMMSE is competitive yet suboptimal even with sufficient training data. This is because it estimates the oracle LMMSE filter in \cref{eq:lmmse}, which is optimal only under the least squares loss criterion.

\item The polynomial filter families adapt standard \gls{GSO}-based filtering to subgraphs, but their performance is suboptimal because their filter spaces are inherently limited. For the same maximum order $r=3$, a polynomial filter captures only a single message-passing pattern, whereas the \gls{SFFA} filter space supports a richer set of patterns.

\item The proposed \gls{SFFA} filter space $\bbR_{\leq r}\langle\tbL_\Vsub^{[1]},\tbL_\Vsub^{[2]},\ldots,\tbL_\Vsub^{[k]}\rangle$ achieves the best performance. Under the same parameter budget, it consistently outperforms all polynomial filters and $\bbR_{\leq r}[\tbL_\Vsub^{[1]},\tbL_\Vsub^{[2]},\ldots,\tbL_\Vsub^{[k]}]$ by leveraging a richer noncommutative algebra generated by multiple subgraph-supported primitives. Moreover, by using distance-based subgraph Laplacians as primitives, this filter space encodes more informative message-passing schemes than $\bbR_{\leq r}\langle\tbL_{\mathrm{rand},1},\tbL_{\mathrm{rand},2},\ldots,\tbL_{\mathrm{rand},k}\rangle$, which is built from randomly selected sub-subgraphs.

\end{itemize}

We also conduct significance tests for each fixed subgraph ratio $\gamma$. We use paired one-sided $t$-tests with the proposed \gls{SFFA} as the reference method. Specifically, for each competing method $\frakM$, we form the paired loss differences
\begin{align*}
d_i\tc{\frakM,p}={\calL_{\frakM,i}}\tc{p}-{\calL_{\mathrm{SFFA},i}}\tc{p},\qquad i=1,\ldots,10,
\end{align*}
where a positive mean difference indicates that \gls{SFFA} achieves a smaller test loss. The null and alternative hypotheses are therefore
\begin{align*}
H_0:\mathbb{E}[{d_i}\tc{\frakM,p}]\leq 0,\qquad
H_1:\mathbb{E}[{d_i}\tc{\frakM,p}]>0.
\end{align*}
For each subgraph ratio $p$, the resulting $p$-values from comparing \gls{SFFA} against all competing methods are corrected using the Holm-Bonferroni procedure with a predetermined significance level $\alpha_s=0.05$.

According to \cref{tab:exp_sfl_filtering_pval}, the null hypothesis is rejected for all comparisons under each fixed subgraph ratio $\gamma$, supporting the statistical superiority of \gls{SFFA} for $\gamma=0.1,0.2,\ldots,0.9$.

\subsection{Subgraph Signal Reconstruction}\label{subsec:exp-reconstruction}

We evaluate multiple subgraph filtering methods on a bandlimited signal reconstruction task using subgraphs derived from the Windmill large dataset \cite{pygtemporal}. 
To construct the ambient graph $G$ with $N=319$ nodes, we first build an unweighted, symmetric $5$-nearest-neighbor graph, which yields two connected components. We then connect these components by adding a single inter-component edge corresponding to the largest original proximity weight between them, resulting in the final graph $G$ with a total of $999$ edges. The raw Windmill dataset contains $T=17,472$ timestamps, each associated with a graph signal defined on the node set $V$ of $G$. From these raw signals, we construct the target signal $\by^i$, $i=1,\dots,T$, as a $K$-bandlimited signal with respect to the normalized Laplacian $\tbL$ of $G$, with $K=250$. The input signal $\bx^i$ is obtained by masking $\by^i$ over a randomly selected missing-node set $\Vsub_M\subset\Vsub$ of size $N_M$. We use the first $n=200$ raw signals for training and the subsequent $n_0=200$ raw signals for testing.

\subsubsection{Different subgraph sizes}\label{subsubsec:p-sweep}

For each trial of \gls{SFL} on reconstruction with $(\bx,\by)$ restricted to a subgraph, we randomly pick a subset $\Vsub\subset V$ with $N_0=\gamma N$ nodes to generate the subgraph $\Gsub$, with $\gamma\in\{0.3,0.4,0.5,0.6,0.7\}$. We further randomly choose a missing node set $\Vsub_M\subset\Vsub$ with $N_M=\gamma_1N$ nodes, where $p_1=0.1$, and the goal is to reconstruct $\by_\Vsub$ from $\bx_\Vsub$.

Besides numerical LMMSE reference, we also report an asymptotic oracle LMMSE baseline, which estimates the unrestricted subgraph LMMSE operator using the subgraph signals from all $T=17,472$ timestamps, thereby approximating the oracle LMMSE filter.

We compare the proposed distance-based \gls{SFFA} with other structured \gls{SFL} filter banks and the numerical LMMSE baseline. We set the parameter budget $\alpha N_0^\beta$ with $\alpha=\beta=1$ according to \cref{eq:budget}. We fix $r=2$ and maximize $k$ to exploit the budget, leading to $k=9,10,12,13,14$ for $\gamma=0.3,0.4,0.5,0.6,0.7$, respectively. For the three single-\gls{GSO} polynomial baselines, we separately increase their degrees from $2$ to $6$ and report the best result within each family to indicate the upper limit of these polynomial filters.

\begin{table*}[!t]
\centering
    \caption{Test RMSE $\times10^{-2}$ for subgraph signal reconstruction with parameter budget $1\cdot N_0^{1}$, with the \first{best}, \second{second}, and \third{third} best performance highlighted. LMMSE results are provided as references.}
\label{tab:exp_sfl_reconstruction_psweep}

\setlength{\tabcolsep}{3pt}
\resizebox{0.7\textwidth}{!}{%
\begin{tabular}{lccccc}
\toprule
Subgraph ratio
& $\gamma=0.3$
& $\gamma=0.4$
& $\gamma=0.5$
& $\gamma=0.6$
& $\gamma=0.7$ \\
$k$
& $9$
& $10$
& $12$
& $13$
& $14$ \\
\midrule
Asymptotic Oracle LMMSE
& $6.50\pm0.47$
& $5.76\pm0.33$
& $5.15\pm0.20$
& $4.49\pm0.34$
& $3.96\pm0.23$ \\
Numerical LMMSE
& $9.30\pm0.72$
& $9.85\pm0.82$
& $10.71\pm0.46$
& $12.48\pm1.01$
& $22.43\pm3.07$ \\
\midrule
Best $\bbR_{\leq r}[\tbL_{\mathrm{ind}}]$
& $16.81\pm2.00$
& $14.62\pm1.26$
& $12.92\pm2.40$
& $11.40\pm1.10$
& $11.59\pm1.88$ \\
Best $\bbR_{\leq r}[\tbL_{\mathrm{Kron}}]$
& $\third{11.97\pm1.59}$
& $11.30\pm0.87$
& $11.52\pm1.46$
& $10.76\pm1.00$
& $11.03\pm1.39$ \\
Best $\bbR_{\leq r}[\tbL_{\Vsub}^{\cup_k}]$
& $12.33\pm0.84$
& $11.99\pm0.64$
& $12.44\pm0.90$
& $12.51\pm0.70$
& $12.47\pm0.90$ \\
\midrule
$\bbR_{\leq 2}[\tbL_{\Vsub}^{[1]},\ldots,\tbL_{\Vsub}^{[k]}]$
& $\second{10.92\pm1.30}$
& $\second{9.97\pm0.72}$
& $\third{9.94\pm0.97}$
& $\third{9.39\pm0.54}$
& $\third{9.27\pm0.83}$ \\
$\bbR_{\leq 2}\langle\tbL_{\mathrm{rand},1},\ldots,\tbL_{\mathrm{rand},k}\rangle$
& $14.57\pm2.12$
& $\third{11.27\pm1.72}$
& $\second{8.98\pm2.18}$
& $\first{7.37\pm0.46}$
& $\second{7.74\pm1.57}$ \\
\midrule
$\bbR_{\leq 2}\langle\tbL_{\Vsub}^{[1]},\ldots,\tbL_{\Vsub}^{[k]}\rangle$
& $\first{8.86\pm0.62}$
& $\first{8.37\pm0.55}$
& $\first{8.13\pm0.61}$
& $\second{7.56\pm0.39}$
& $\first{7.44\pm0.62}$ \\
\bottomrule
\end{tabular}%
}
\vspace{-3mm}
\end{table*}

\begin{table*}[!t]
\centering
\caption{Holm-Bonferroni-adjusted $p$-values of significance test over proposed \gls{SFFA} in subgraph signal reconstruction task. Non-significant results at level $\alpha_s=0.05$ are boldfaced.}
\label{tab:exp_sfl_reconstruction_pval}
\setlength{\tabcolsep}{3pt}

\resizebox{0.7\textwidth}{!}{%
\begin{tabular}{lccccc}
\toprule
Subgraph ratio
& $\gamma=0.3$
& $\gamma=0.4$
& $\gamma=0.5$
& $\gamma=0.6$
& $\gamma=0.7$ \\
$k$
& $9$
& $10$
& $12$
& $13$
& $14$ \\
\midrule
Best $\bbR_{\leq r}[\tbL_{\mathrm{ind}}]$
& $1.30{\times}10^{-7}$
& $1.64{\times}10^{-7}$
& $4.02{\times}10^{-5}$
& $5.81{\times}10^{-7}$
& $3.91{\times}10^{-6}$ \\
Best $\bbR_{\leq r}[\tbL_{\mathrm{Kron}}]$
& $9.49{\times}10^{-6}$
& $3.16{\times}10^{-7}$
& $6.07{\times}10^{-6}$
& $9.30{\times}10^{-7}$
& $3.27{\times}10^{-7}$ \\
Best $\bbR_{\leq r}[\tbL_{\Vsub}^{\cup_k}]$
& $1.24{\times}10^{-8}$
& $8.29{\times}10^{-9}$
& $2.66{\times}10^{-9}$
& $4.59{\times}10^{-9}$
& $5.83{\times}10^{-10}$ \\
\midrule
$\bbR_{\leq 2}[\tbL_{\Vsub}^{[1]},\ldots,\tbL_{\Vsub}^{[k]}]$
& $1.15{\times}10^{-5}$
& $7.81{\times}10^{-7}$
& $2.79{\times}10^{-6}$
& $4.48{\times}10^{-7}$
& $4.58{\times}10^{-7}$ \\
$\bbR_{\leq 2}\langle\tbL_{\mathrm{rand},1},\ldots,\tbL_{\mathrm{rand},k}\rangle$
& $4.20{\times}10^{-6}$
& $3.17{\times}10^{-4}$
& $\mathbf{1.00{\times}10^{-1}}$
& $\mathbf{9.50{\times}10^{-1}}$
& $\mathbf{2.04{\times}10^{-1}}$ \\
\bottomrule
\end{tabular}%
}
\vspace{-3mm}
\end{table*}

The results in \cref{tab:exp_sfl_reconstruction_psweep} indicate that the \gls{SFFA} filter bank with distance-based subgraph Laplacians consistently outperforms all competing methods for $\gamma=0.3,0.4,0.5$. As expected, as $\gamma$ and correspondingly $k$ increases, the performance becomes similar to random-group \gls{SFFA} since the random group now has a high probability of selecting \glspl{DSSO} with similar properties as the primitives in the distance-based subgraph Laplacian \gls{SFFA}. With only $200$ training samples, numerical LMMSE performs suboptimal because of the numerical instability issues; nevertheless, since it uses the full $N_0^2$-dimensional matrix space, it is reported only as a reference and excluded from the competing methods.

These results demonstrate that with appropriately chosen Laplacian primitives, a maximum order of $r=2$ suffices to construct a sufficiently expressive \gls{SFFA} filter space. In contrast, traditional \gls{GSO} filtering quickly reaches its representational limits regardless of the maximum order $r$, as it is inherently constrained to a single message propagation scheme. For reference, the asymptotic oracle LMMSE provides an empirical approximation to the population lower bound for subgraph-only linear filters under the $L^2$ loss.

We also conduct the same significance test as in \cref{subsec:exp-sfl}, for each fixed $\gamma$ and significance level $\alpha_s=0.05$\cite{TraSme:J12}. Referring to \cref{tab:exp_sfl_reconstruction_pval}, excluding LMMSE-based methods, the corrected tests show that our proposed \gls{SFFA} is significantly better than all competing methods, except for $\bbR_{\leq 2}\langle\tbL_{\mathrm{rand},1},\tbL_{\mathrm{rand},2},\ldots,\tbL_{\mathrm{rand},k}\rangle$ at $\gamma=0.5,0.6,0.7$, where the differences are not statistically significant after correction.

\subsubsection{Parameter budget analysis}\label{subsubsec:param-sweep}

\begin{figure}[!htb]
\begin{center}
\includegraphics[width=\columnwidth]{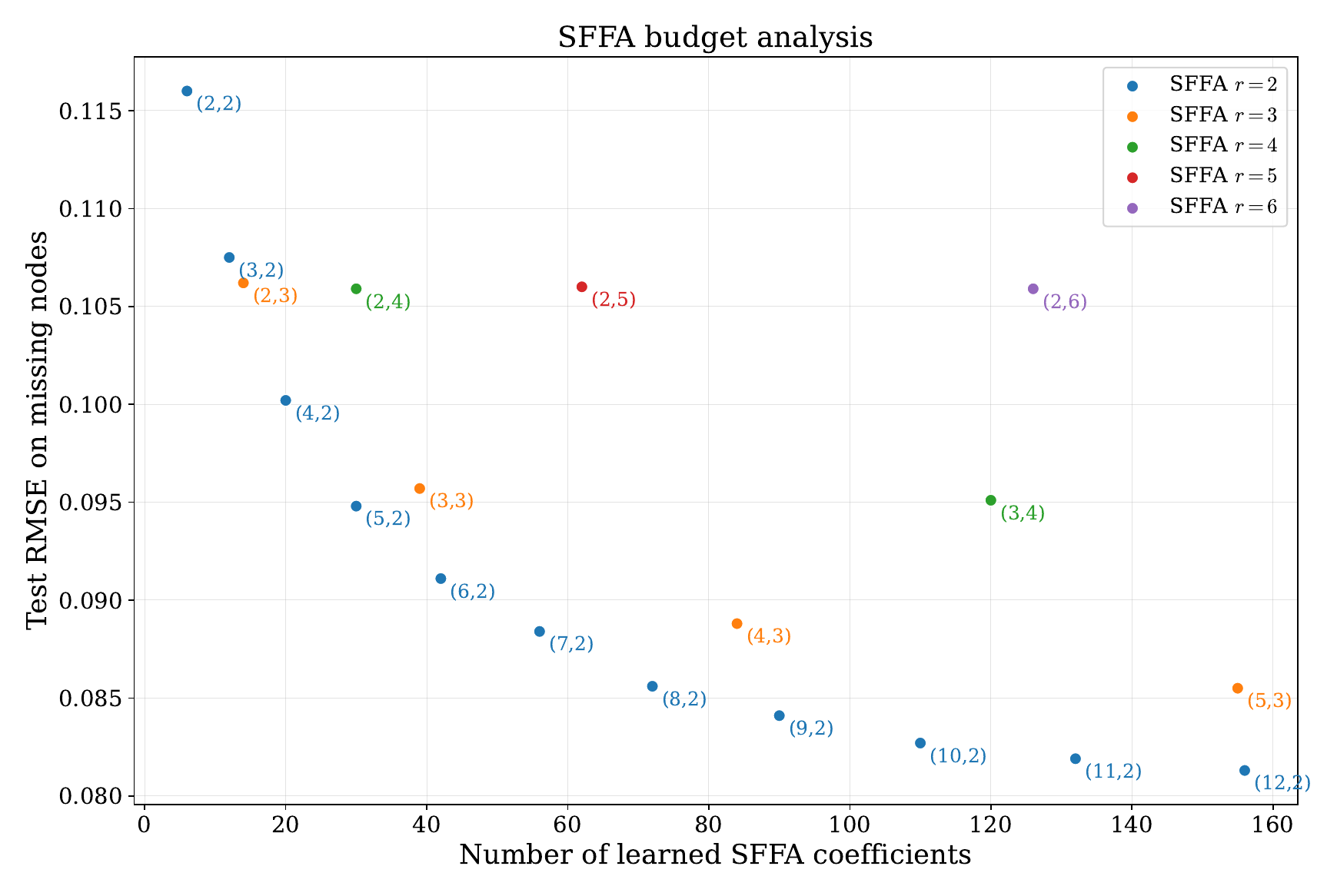}
\end{center}
\vspace{-5mm}
\caption{Test RMSE for subgraph signal reconstruction for different parameters $(k,r)$.}
\vspace*{-3pt}
\label{fig:exp_rec_paramsweep}
\end{figure}

We fix $\gamma=0.5$ and report the performance of the distance-based subgraph Laplacian \gls{SFFA} over all $(k,r)$ pairs with $k,r\geq 2$, subject to the constraint that the number of trainable parameters does not exceed the budget $N_0$. For reference, we also include the performance of numerical LMMSE, as well as the best-performing instances of $\bbR_{\leq r}[\tbL_{\mathrm{ind}}]$, $\bbR_{\leq r}[\tbL_{\mathrm{Kron}}]$, and $\bbR_{\leq r}[\tbL_\Vsub^{\cup_{12}}]$. 
The following observations can be drawn from \cref{fig:exp_rec_paramsweep}:
\begin{itemize}
    \item The polynomial filter spaces induced by single-\gls{GSO} filtering on the subgraph are inherently limited in expressiveness, causing all three single-\gls{GSO} methods to perform worse than numerical LMMSE. Importantly, these structural limitations cannot be overcome by increasing the maximum filter order.
    \item Nearly all tested $(k,r)$ pairs for the distance-based subgraph Laplacian \gls{SFFA} outperform numerical LMMSE, with the sole exception of $(2,2)$, further underscoring the importance of a sufficiently expressive subgraph filter bank.
    \item The graph $G$ in our experimental setting ($N=319$ nodes) is sparse with 999 edges. As discussed in \cref{subsec:expressiveness}, such sparsity favors \gls{SFFA} designs with more primitives over those with higher maximum order. This is reflected in the results in two ways: fixing $k$ and increasing $r$ yields no notable improvement for $r\geq 4$, whereas fixing $r$ and increasing $k$ consistently improves the performance of the subgraph filter bank.
\end{itemize}

\subsection{Traffic Flow Prediction on Subgraphs}\label{subsec:exp-traffic-pred}

We now evaluate the performance of different \gls{SFL} methods on a subgraph traffic flow prediction task using the same dataset METR-LA dataset \cite{li18} in \cref{subsec:exp-sfl}. We split the data in a 7:1:2 ratio for training, validation, and testing, respectively.

Given input and output horizons $T_{\mathrm{in}}$ and $T_{\mathrm{out}}$, each input-output subgraph signal pair indexed by $(t_1,t_2)$, where $1\leq t_1\leq T_{\mathrm{in}}$ and $1\leq t_2\leq T_{\mathrm{out}}$, is modeled by a linear subgraph filter $\Fsub^{t_1,t_2}\in\spn\calS(\Gsub)$. Specifically, for each training window, the prediction is given by
\begin{align*}
\Psub\hat{\by}^{t_2}=\sum_{t_1=1}^{T_{\mathrm{in}}}\Fsub\tc{t_1,t_2}\Psub\bx^{t_1}.
\end{align*}
Therefore, the traffic prediction task can be formulated as a multi-input multi-output \gls{SFL} problem in which all $T_{\mathrm{in}}\times T_{\mathrm{out}}$ filters belong to the same subgraph filter space $\spn\calS(\Gsub)$ but use distinct coefficients.

For prediction, we set the input horizon to $T_{\mathrm{in}}=12$ and the output horizon to $T_{\mathrm{out}}=3$, and slide time windows of length $T_{\mathrm{in}}+T_{\mathrm{out}}$ across each split of the dataset.
For each \gls{SFL} trial on subgraph prediction, we select a subset $\Vsub\subset V$ with $N_0=\gamma N$ nodes to construct the subgraph $\Gsub$, where $\gamma=0.3,0.4,\ldots,0.7$.
For single-\gls{GSO} filter banks, we set $r=3$. For $\bbR_{\leq r}[\bL_{\Vsub}^{[1]},\bL_{\Vsub}^{[2]},\ldots,\bL_{\Vsub}^{[k]}]$, $\bbR_{\leq r}\langle\bL_{\mathrm{rand},1},\bL_{\mathrm{rand},2},\ldots,\bL_{\mathrm{rand},k}\rangle$, and $\bbR_{\leq  r}\langle\bL_{\Vsub}^{[1]},\bL_{\Vsub}^{[2]},\ldots,\bL_{\Vsub}^{[k]}\rangle$, we use $(k,r)=(10,2)$. We optimize using the least-squares loss.

In this task, we use the \gls{AR} model as the basic baseline:
\begin{align*}
\hat y^{t_2}_u=\sum_{t_1=1}^{T_{\mathrm{in}}}a\tc{t_1,t_2}x^{t_1}_u,
\end{align*}
where $x^{t_1}_u$ denotes the scalar input signal at node $u\in\Vsub$, extracted from $\bx^{t_1}$. This linear model treats the time series at each node independently.
For each method and each $p$, we report the test MSE and the \gls{RI} over AR. Additional optimization details are provided in \cref{sec:implementation} in the supplementary material.

\begin{table*}[!t]
\centering
\caption{Average test MSE (first sub-row) and \gls{RI} (second sub-row) with standard deviation for traffic flow prediction on METR-LA, with the \first{best}, \second{second}, and \third{third} best performance highlighted. }
\label{tab:exp_traffic_psweep}
\setlength{\tabcolsep}{3pt}

\resizebox{0.7\textwidth}{!}{%
\begin{tabular}{lccccc}
\toprule
Subgraph ratio 
& $\gamma=0.3$
& $\gamma=0.4$
& $\gamma=0.5$
& $\gamma=0.6$
& $\gamma=0.7$ \\
\midrule
AR
& $69.09\pm 1.75$
& $69.44\pm 1.16$
& $69.81\pm 0.75$
& $69.88\pm 1.01$
& $69.89\pm 1.01$ \\
\midrule

\multirow{2}{*}{$\bbR_{\leq 3}[\tbL_{\mathrm{ind}}]$}
& $68.79\pm 1.70$
& $69.04\pm 1.15$
& $69.36\pm 0.73$
& $69.37\pm 0.97$
& $69.32\pm 0.95$ \\
& $0.44\pm 0.12\%$
& $0.57\pm 0.10\%$
& $0.64\pm 0.08\%$
& $0.74\pm 0.06\%$
& $0.81\pm 0.09\%$ \\
\cmidrule(lr){2-6}

\multirow{2}{*}{$\bbR_{\leq 3}[\tbL_{\mathrm{Kron}}]$}
& $68.76\pm 1.73$
& $69.01\pm 1.17$
& $69.35\pm 0.75$
& $69.36\pm 0.95$
& $69.33\pm 0.94$ \\
& $0.48\pm 0.07\%$
& $0.61\pm 0.08\%$
& $0.65\pm 0.08\%$
& $0.74\pm 0.09\%$
& $0.80\pm 0.10\%$ \\
\cmidrule(lr){2-6}

\multirow{2}{*}{$\bbR_{\leq 3}[\tbL_\Vsub^\cup]$}
& $68.89\pm 1.74$
& $69.23\pm 1.18$
& $69.58\pm 0.74$
& $69.65\pm 0.98$
& $69.64\pm 0.97$ \\
& $0.29\pm 0.07\%$
& $0.30\pm 0.09\%$
& $0.32\pm 0.08\%$
& $0.33\pm 0.05\%$
& $0.35\pm 0.07\%$ \\
\midrule

\multirow{2}{*}{$\bbR_{\leq 2}[\tbL_\Vsub^{[1]},\ldots,\tbL_\Vsub^{[10]}]$}
& $\third{68.65\pm 1.74}$
& $\third{68.92\pm 1.18}$
& $\third{69.23\pm 0.74}$
& $\third{69.27\pm 0.95}$
& $\third{69.23\pm 0.93}$ \\
& $\third{0.63\pm 0.12\%}$
& $\third{0.74\pm 0.10\%}$
& $\third{0.82\pm 0.09\%}$
& $\third{0.88\pm 0.10\%}$
& $\third{0.93\pm 0.11\%}$ \\
\cmidrule(lr){2-6}

\multirow{2}{*}{$\bbR_{\leq 2}\langle\tbL_{\mathrm{rand},1},\ldots,\tbL_{\mathrm{rand},10}\rangle$}
& $\second{68.44\pm 1.65}$
& $\second{68.76\pm 1.15}$
& $\second{69.11\pm 0.71}$
& $\second{69.15\pm 0.99}$
& $\second{69.12\pm 0.95}$ \\
& $\second{0.94\pm 0.22\%}$
& $\second{0.98\pm 0.11\%}$
& $\second{1.00\pm 0.10\%}$
& $\second{1.05\pm 0.09\%}$
& $\second{1.09\pm 0.08\%}$ \\
\midrule

\multirow{2}{*}{$\bbR_{\leq 2}\langle\tbL_\Vsub^{[1]},\ldots,\tbL_\Vsub^{[10]}\rangle$}
& $\first{68.40\pm 1.74}$
& $\first{68.62\pm 1.17}$
& $\first{68.94\pm 0.76}$
& $\first{68.99\pm 0.95}$
& $\first{68.97\pm 0.92}$ \\
& $\first{1.00\pm 0.26\%}$
& $\first{1.18\pm 0.16\%}$
& $\first{1.24\pm 0.07\%}$
& $\first{1.28\pm 0.11\%}$
& $\first{1.31\pm 0.14\%}$ \\
\bottomrule
\end{tabular}%
}
\vspace{-3mm}

\end{table*}

\begin{table*}[!t]
\centering
\caption{Holm-Bonferroni-adjusted $p$-values of significance test over proposed \gls{SFFA} in subgraph signal prediction task. Non-significant results at level $\alpha_s=0.05$ are boldfaced.}
\label{tab:exp_traffic_pval}
\setlength{\tabcolsep}{3pt}

\resizebox{0.7\textwidth}{!}{%
\begin{tabular}{lccccc}
\toprule
Subgraph ratio 
& $\gamma=0.3$
& $\gamma=0.4$
& $\gamma=0.5$
& $\gamma=0.6$
& $\gamma=0.7$ \\
\midrule
AR
& $2.42\times 10^{-6}$
& $5.81\times 10^{-9}$
& $3.07\times 10^{-12}$
& $2.25\times 10^{-10}$
& $1.53\times 10^{-9}$ \\
\midrule

$\bbR_{\leq 3}[\tbL_{\mathrm{ind}}]$
& $1.09\times 10^{-4}$
& $4.26\times 10^{-6}$
& $1.34\times 10^{-8}$
& $4.26\times 10^{-9}$
& $6.43\times 10^{-9}$ \\

$\bbR_{\leq 3}[\tbL_{\mathrm{Kron}}]$
& $1.09\times 10^{-4}$
& $2.05\times 10^{-6}$
& $1.03\times 10^{-8}$
& $3.87\times 10^{-9}$
& $4.88\times 10^{-9}$ \\

$\bbR_{\leq 3}[\tbL_\Vsub^\cup]$
& $1.90\times 10^{-5}$
& $9.02\times 10^{-9}$
& $3.99\times 10^{-11}$
& $2.25\times 10^{-10}$
& $1.29\times 10^{-9}$ \\
\midrule

$\bbR_{\leq 2}[\tbL_\Vsub^{[1]},\ldots,\tbL_\Vsub^{[10]}]$
& $1.23\times 10^{-4}$
& $2.74\times 10^{-7}$
& $1.48\times 10^{-8}$
& $3.87\times 10^{-9}$
& $6.20\times 10^{-9}$ \\

$\bbR_{\leq 2}\langle\tbL_{\mathrm{rand},1},\ldots,\tbL_{\mathrm{rand},10}\rangle$
& $\bm{2.75\times 10^{-1}}$
& $3.12\times 10^{-4}$
& $3.45\times 10^{-5}$
& $1.23\times 10^{-4}$
& $8.55\times 10^{-6}$ \\
\bottomrule
\end{tabular}%
}
\vspace{-3mm}
\end{table*}

From \cref{tab:exp_traffic_psweep}, we observe that the proposed distance-based subgraph Laplacian \gls{SFFA} consistently outperforms competing methods with stable \gls{RI} values. These findings demonstrate the advantage of an \gls{SFFA}-based subgraph filter bank in conjunction with structured \gls{DSSO} primitives, even for \gls{SFL} tasks with multiple inputs and outputs.

We also conduct the same significance test as in \cref{subsec:exp-sfl}, for each fixed $\gamma$ and significance level $\alpha_s=0.05$. Referring to \cref{tab:exp_traffic_pval}, the corrected tests show that our proposed \gls{SFFA} is significantly better than all competing methods, except for $\bbR_{\leq 2}\langle\tbL_{\mathrm{rand},1},\tbL_{\mathrm{rand},2},\ldots,\tbL_{\mathrm{rand},k}\rangle$ at $\gamma=0.3$.

\section{Conclusion}\label{sec:conclusion}

In this paper, we introduced a framework for subgraph filter learning and developed a principled method based on subgraph filter free algebras for learning under partial graph observations. By leveraging distance-$k$ subgraphs, the proposed construction captures multi-scale neighborhood interactions in the ambient graph, yielding a structured Laplacian algebra that is both interpretable and expressive. From a theoretical standpoint, we characterized the performance gap between learned subgraph filters and the ground-truth mapping, and showed that this gap arises from multiple sources, including the intrinsic limitations of linear filtering, distance-based restrictions of the filter space, numerical approximation error, and regularization effects. These findings clarify key trade-offs in subgraph-based learning. Empirically, experiments on real-world traffic and wind-speed datasets show that the proposed framework consistently outperforms numerical filter-approximation methods and polynomial subgraph filters, underscoring its practical effectiveness.

Future work may consider more challenging settings, including distribution shift, time-varying signals, and signals on higher-order structures. One promising direction is joint spatio-temporal \gls{SFL} \cite{ZhaTay:C26SAM}, which extends the present framework to discrete-time graph signals by learning subgraph-supported joint-domain filters via a spatio-temporal filter algebra that couples spatial filtering with temporal dependency modeling. Another direction is subgraph sheaf diffusion \cite{ZhaTay:C26SSP}, which generalizes the structural component of \gls{SFL} from subgraph Laplacian-based filters to sub-sheaf diffusion operators, where local spaces and restriction maps provide a more flexible mechanism for aligning heterogeneous or phase-shifted signals.


\appendices

\section{Selected Proofs}\label[Appendix]{sec:selected-proofs}
In this section we provide the proof sketch for \cref{thm:hp-ridge-decomp} and the complete proof of \cref{thm:decomposition-bounds}, \cref{thm:hp-ridge-oracle}. Other proofs are provided in the supplementary material.

\begin{proof}[Proof sketch of \cref{thm:hp-ridge-decomp}.] 

The decomposition follows by tracking which splits are orthogonal. 

First, since $\bw$ is uncorrelated with both $\bx$ and $f(\bx)$, the cross terms vanish, and hence
\begin{align*}
    \E\norm{\Psub f(\bx)-\hat{\bF}_{m,\eta}\tc{n}\bx_\Vsub}_2^2
    \leq\calR_{L^2}(\hat{\bF}_{m,\eta}\tc{n}).
\end{align*}

Then, we split the risk through $\Fsub^*$ and $\bar{\bF}_m$:
\begin{align*}
    \calR_{L^2}(\hat{\bF}_{m,\eta}\tc{n})
    =&\calR_{L^2}(\Fsub^*)
    +\parens*{\calR_{L^2}(\bar{\bF}_m)-\calR_{L^2}(\Fsub^*)}\\
    &+\parens*{\calR_{L^2}(\hat{\bF}_{m,\eta}\tc{n})-\calR_{L^2}(\bar{\bF}_m)}.
\end{align*}
The first difference is an equality because the residual of $\Fsub^*$ is orthogonal to every direction in $\bbR^{N_0\times N_0}$, so
\begin{align*}
    \calR_{L^2}(\bar{\bF}_m)-\calR_{L^2}(\Fsub^*)
    =
    \norm{\bar{\bF}_m-\Fsub^*}_{\Msub}^2.
\end{align*}
The second difference is also an equality because the residual of $\bar{\bF}_m$ is orthogonal to every direction in $\calV_k$, and
$\hat{\bF}_{m,\eta}\tc{n}-\bar{\bF}_m\in\calV_k$. Hence
\begin{align*}
    \calR_{L^2}(\hat{\bF}_{m,\eta}\tc{n})-\calR_{L^2}(\bar{\bF}_m)
    =
    \norm{\hat{\bF}_{m,\eta}\tc{n}-\bar{\bF}_m}_{\Msub}^2.
\end{align*}

Finally, we split the last norm:
\begin{align*}
    \hat{\bF}_{m,\eta}\tc{n}-\bar{\bF}_m
    =
    \big(\hat{\bF}_{m,\eta}\tc{n}-\bar{\bF}_{m,\eta}\big)
    +
    \big(\bar{\bF}_{m,\eta}-\bar{\bF}_m\big).
\end{align*}
This split is not orthogonal, so it only gives an inequality with a factor $2$:
\begin{align*}
    \norm{\hat{\bF}_{m,\eta}\tc{n}-\bar{\bF}_m}_{\Msub}^2
    \leq
    2\norm{\bar{\bF}_{m,\eta}-\hat{\bF}_{m,\eta}\tc{n}}_{\Msub}^2\\
    + 2\norm{\bar{\bF}_m-\bar{\bF}_{m,\eta}}_{\Msub}^2.
\end{align*}
Combining these three splits gives \cref{eq:hp-ridge-full}.
\end{proof}

\begin{proof}[Proof of \cref{thm:hp-ridge-oracle}]

Let $\bepsilon(\bx)\coloneq f(\bx)-\bJ\bx$. Then, 
\begin{align*}
\E[\bepsilon(\bx)\bx\T]=\E[f(\bx)\bx\T]-\bJ\E[\bx\bx\T]\\
=\E[f(\bx)\bx\T]-\E[f(\bx)\bx\T]\bM^{-1}\bM=\bzero.
\end{align*}
Partition $\bJ$ and $\bM$ according to $\Vsub$ and $\Vsubc$. Then
\begin{align*}
\Psub f(\bx)
=\bJ_{\Vsub\Vsub}\bx_{\Vsub}+\bJ_{\Vsub\Vsubc}\bx_{\Vsubc}+\Psub\bepsilon(\bx).
\end{align*}
By the definition of the oracle LMMSE subgraph filter and by the assumption that the noise is uncorrelated with $\bx$, we have $\Fsub^*=\E[\by_{\Vsub}\bx_{\Vsub}\T]\Msub^{-1}=\E[\Psub f(\bx)\bx_{\Vsub}\T]\Msub^{-1}$. Using the decomposition above and $\E[\bepsilon(\bx)\bx_{\Vsub}\T]=\bzero$, we obtain $\E[\Psub f(\bx)\bx_{\Vsub}\T]=\bJ_{\Vsub\Vsub}\Msub+\bJ_{\Vsub\Vsubc}\bM_{\Vsubc\Vsub}$. Hence, $\Fsub^*=\bJ_{\Vsub\Vsub}+\bJ_{\Vsub\Vsubc}\bM_{\Vsubc\Vsub}\Msub^{-1}$. Therefore,
\begin{align*}
\Psub f(\bx)-\Fsub^*\bx_{\Vsub}
&=\bJ_{\Vsub\Vsubc}\left(\bx_{\Vsubc}-\bM_{\Vsubc\Vsub}\Msub^{-1}\bx_{\Vsub}\right)+\Psub\bepsilon(\bx).
\end{align*}

For brevity, define $\br_{\Vsubc|\Vsub}\coloneq\bx_{\Vsubc}-\bM_{\Vsubc\Vsub}\Msub^{-1}\bx_{\Vsub}.$ Then, 
\begin{align*} 
\E[\br_{\Vsubc|\Vsub}\br_{\Vsubc|\Vsub}\T]&=\E\bigg[\bigg(\bx_{\Vsubc}-\bM_{\Vsubc\Vsub}\Msub^{-1}\bx_{\Vsub}\bigg)\\
&\qquad\quad \bigg(\bx_{\Vsubc}-\bM_{\Vsubc\Vsub}\Msub^{-1}\bx_{\Vsub}\bigg)\T \bigg]\\
&=\bM_{\Vsubc\Vsubc}-\bM_{\Vsubc\Vsub}\Msub^{-1}\bM_{\Vsub\Vsubc}. 
\end{align*}

Moreover, since $\E[\bepsilon(\bx)\bx\T]=\bzero$, we also have
\begin{align*}
\E[\Psub\bepsilon(\bx)\br_{\Vsubc|\Vsub}\T]
&=\E[\Psub\bepsilon(\bx)\bx_{\Vsubc}\T]\\
&\quad-\E[\Psub\bepsilon(\bx)\bx_{\Vsub}\T]\Msub^{-1}\bM_{\Vsub\Vsubc}\\
&=\bzero.
\end{align*}

Thus the cross term vanishes, and
\begin{align*}
\calR_{L^2}(\Fsub^*)
&=\E\norm{\Psub f(\bx)-\Fsub^*\bx_{\Vsub}}_2^2 \\
&=\E\norm{\bJ_{\Vsub\Vsubc}\br_{\Vsubc|\Vsub}+\Psub\bepsilon(\bx)}_2^2 \\
&=\tr\left(\bJ_{\Vsub\Vsubc}\left(\bM_{\Vsubc\Vsubc}-\bM_{\Vsubc\Vsub}\Msub^{-1}\bM_{\Vsub\Vsubc}\right)\bJ_{\Vsub\Vsubc}\T\right)\\
&\qquad+\E\norm{\Psub\bepsilon(\bx)}_2^2.
\end{align*}

By the definition of $\sigma_{\Vsub}^2$, this becomes
\begin{align*}
&\calR_{L^2}(\Fsub^*)\\
&\quad=\sigma_{\Vsub}^2+\tr\left(\bJ_{\Vsub\Vsubc}\left(\bM_{\Vsubc\Vsubc}-\bM_{\Vsubc\Vsub}\Msub^{-1}\bM_{\Vsub\Vsubc}\right)\bJ_{\Vsub\Vsubc}\T\right).
\end{align*}

Using the standard trace bound
\begin{align*}
\tr(\bS\bC\bS\T)\leq\lambdamax{\bC}\norm{\bS}_F^2,\qquad \bC\succeq \bzero,
\end{align*}
with $\bS=\bJ_{\Vsub\Vsubc}$ and $\bC=\bM_{\Vsubc\Vsubc}-\bM_{\Vsubc\Vsub}\Msub^{-1}\bM_{\Vsub\Vsubc}$, and using the Loewner order of Schur complement, we get
\begin{align*}
&\calR_{L^2}(\Fsub^*)-\sigma_{\Vsub}^2\\
&\quad\leq\lambdamax{\bM_{\Vsubc\Vsubc}-\bM_{\Vsubc\Vsub}\Msub^{-1}\bM_{\Vsub\Vsubc}}\norm{\bJ_{\Vsub\Vsubc}}_F^2\\
&\quad\leq\lambdamax{\bM_{\Vsubc\Vsubc}}\norm{\bJ_{\Vsub\Vsubc}}_F^2\\
&\quad\leq\kappa_{\max}\norm{\bJ_{\Vsub\Vsubc}}_F^2.\\
\end{align*}
Finally, by the definition of $s_\partial$,
\begin{align*}
\norm{\bJ_{\Vsub\Vsubc}}_F^2=\sum_{u\in\Vsub}\sum_{v\in\Vsubc}\left|\left(\bJ\right)_{uv}\right|^2\leq|\Vsub||\Vsubc|s_\partial^2.
\end{align*}

Substituting this into the previous display gives
\begin{align*}
\calR_{L^2}(\Fsub^*)\leq\sigma_{\Vsub}^2+\kappa_{\max}|\Vsub||\Vsubc|s_\partial^2,
\end{align*}
which concludes the proof.

\end{proof}

\bibliographystyle{IEEEtran}
\bibliography{bib/IEEEabrv,bib/StringDefinitions,bib/SIGNAL,bib/mybib}

\end{document}